\theoremstyle{plain}
\newtheorem{theorem}{Theorem}[section]
\newtheorem{corollary}[theorem]{Corollary}
\theoremstyle{definition}
\theoremstyle{remark}
\def\eqref#1{equation~\ref{#1}}
\def\1{\bm{1}}
\def\rvk{{\mathbf{k}}}
\def\rvp{{\mathbf{p}}}
\def\rvq{{\mathbf{q}}}
\def\rvv{{\mathbf{v}}}
\def\rvw{{\mathbf{w}}}
\def\rvx{{\mathbf{x}}}
\def\vzero{{\bm{0}}}
\def\mI{{\bm{I}}}
\def\mPhi{{\bm{\Phi}}}
\DeclareMathAlphabet{\mathsfit}{\encodingdefault}{\sfdefault}{m}{sl}
\SetMathAlphabet{\mathsfit}{bold}{\encodingdefault}{\sfdefault}{bx}{n}
\def\gD{{\mathcal{D}}}
\def\gL{{\mathcal{L}}}
\def\gN{{\mathcal{N}}}
\def\gO{{\mathcal{O}}}
\def\gX{{\mathcal{X}}}
\def\sR{{\mathbb{R}}}
\newcommand{\E}{\mathbb{E}}
\DeclareMathOperator*{\argmin}{arg\,min}
\newcommand{\meanp}[2]{\mathbb{E}_{#1} \left\lbrack #2 \right\rbrack}
\newcommand{\kl}[2]{\mathrm{KL}\left(#1 || #2\right)}
\newcommand{\norm}[1]{\left\lVert#1\right\rVert}
\newenvironment{claim}[1]{\par\noindent\underline{Claim:}\space#1}{}
\DeclareMathOperator{\subsample}{select}
\def\method{Gen-neG}
\def\oracle{\ensuremath{\gO}}
\def\synth{\ensuremath{{\widehat{\gD}}}}
\def\cls{\ensuremath{C}}
\icmltitlerunning{Score-based Generative Modeling with Oracle-assisted Guidance}
\begin{document}

\twocolumn[
\icmltitle{Don't be so Negative!\\Score-based Generative Modeling with Oracle-assisted Guidance}

\icmlsetsymbol{equal}{*}

\begin{icmlauthorlist}
\icmlauthor{Saeid Naderiparizi}{equal,ubccs,inverted}
\icmlauthor{Xiaoxuan Liang}{equal,ubccs,inverted}
\icmlauthor{Setareh Cohan}{ubccs}
\icmlauthor{Berend Zwartsenberg}{inverted}
\icmlauthor{Frank Wood}{ubccs,inverted,mila}
\end{icmlauthorlist}

\icmlaffiliation{ubccs}{Department of Computer Science, University of British Columbia, Vancouver, Canada}
\icmlaffiliation{inverted}{Inverted AI, Vancouver, Canada}
\icmlaffiliation{mila}{Montr\'eal Institute for Learning Algorithms (MILA)}

\icmlcorrespondingauthor{Saeid Naderiparizi}{saeidnp@cs.ubc.ca}

\icmlkeywords{Machine Learning, ICML}

\vskip 0.3in
]



\printAffiliationsAndNotice{\icmlEqualContribution} 

\begin{abstract}
Score-based diffusion models are a powerful class of generative models, widely utilized across diverse domains. Despite significant advancements in large-scale tasks such as text-to-image generation, their application to constrained domains has received considerably less attention.
This work addresses model learning in a setting where, in addition to the training dataset, there further exists side-information in the form of an oracle that can label samples as being outside the support of the true data generating distribution. Specifically we develop a new denoising diffusion probabilistic modeling methodology, \method{}, that leverages this additional side-information.
\method{} builds on classifier guidance in diffusion models to guide the generation process towards the positive support region indicated by the oracle. 
We empirically establish the utility of \method{} in applications including collision avoidance in self-driving simulators and  safety-guarded human motion generation.
\end{abstract}

\section{Introduction}\label{sec:intro}

What should we do when we train a generative model that generates samples known to be invalid within the constraints of the data domain?
For instance, when generating traffic scenes, road users cannot overlap each other. Likewise, in robotics, adherence to numerous physics-based constraints is essential for maintaining the appropriate motion and configuration of the robot. Typically, generative models are only trained to maximize the likelihood of a set of ``good'' training data samples.
Nevertheless, when sampling from a fully trained, highly expressive model, some fraction of generated samples fall into the category of ``bad'' samples.
Here we consider the problem of generative modeling where in addition to the conventional training dataset of good samples, we are also given access to constraints in the form of an oracle, which provides insights into whether a given sample is considered bad. Such oracles are ubiquitous in practice and are often a simple function implemented by domain experts.

Modern deep generative models are sufficiently parameterized that they can effectively be trained to result in a model placing a mixture of Dirac measures directly on the training data \citep{somepalli2023diffusion, somepalli2023understanding, carlini2023extracting}.
However, training such models on large amounts of data \citep{rombach2022high} or imposing regularization (such as smaller architectures or fewer integration steps) ensures that they generalize rather than memorize \citep{arpit2017closer,zhang2021understanding}. This also results in these models placing mass in the invalid part of the support \citep{hanneke2018actively}.
In this paper, we assume a modeling regime where the model generalizes effectively. Within this context, our objective is to reduce the probability mass assigned to invalid outputs while avoiding overfitting. Consequently, our contribution can be viewed as a method for controlling the specific type of the model's generalization.

The simplest way to use an oracle is to deploy the model with a rejection sampling loop in which the oracle is used to filter the output and return the constraint-satisfying samples~\citep{kim2023consistency}. 
Depending on circumstances this may constitute an acceptable final ``generative model'', but this solution comes at a (potentially unacceptable) computational cost.  Consider the concrete example of real-time autonomous vehicle path planning and model predictive control \citep{zhong2022guided}.
This task involves generating the next control action for an autonomous vehicle based the past and current state of itself and its surroundings. The generated action must avoid collisions and other types of invalid behavior, collectively referred to as ``infractions.'' This is an extremely challenging task that requires low latency and high success rates.
To guarantee low latency, it is essential to generate a sufficiently large number of parallel samples to ensure obtaining at least one valid sample with high probability.
Assume that a generative model trained on trajectories with no infractions produces infracting trajectories for all vehicles with probability $\epsilon$ (state of the art models \citep{lee2017desire, djuric2018short, gupta2018social, cui2019multimodal, ngiam2021scene, scibior2021imagining, niedoba2023diffusion} can have high infraction rates.)
Generating at least one non-infracting sample with $1-\delta$ probability without looping the rejection sampler requires $\frac{\log \delta}{\log \epsilon}$ parallel samples.  Depending on the specific concrete value of $1-\delta$ required (e.g. 1 chance in a billion of having latency arising from rejection sampling looping imposed) and the baseline trajectory model rejection rate (e.g. 30-50\% is not atypical) this could require running many parallel samplers (in this concrete example around 30). Depending on model size and available realtime edge computational capacity, this quantity may be prohibitively large.  Other examples of this nature arise in many control as inference problems \citep{levine2018reinforcement}.

Minimizing $\epsilon$ directly i.e., restricting the generative model to only place mass on the positive support region indicated by the oracle, is the most natural approach to combat this problem.  Working towards this goal includes a body of work on amortized rejection sampling \citep{warrington2020coping,naderiparizi2022amortized} and the body of related work on generative adversarial networks (GANs) \citep{goodfellow2014generative}.  Of course in the GAN setting, the discriminator (which can be used in a rejection sampling loop for improved performance \citep{azadi2018discriminator, che2020your}) is learned rather than being a fixed, pre-defined oracle as in the case we consider. 

\begin{figure*}
    \vspace*{-4mm}
    \centering
   \subfigure[]{
   \includegraphics[width=0.22\textwidth]{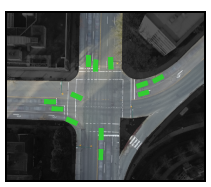}
   \label{fig:banner-a}
   }
   \subfigure[]{
   \includegraphics[width=0.22\textwidth]{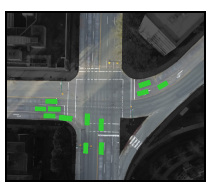}
   \label{fig:banner-b}
   }
   \subfigure[]{
   \includegraphics[width=0.22\textwidth]{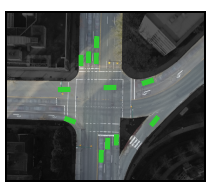}
   \label{fig:banner-c}
   }
   \subfigure[]{
   \includegraphics[width=0.22\textwidth]{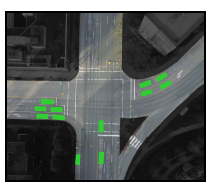}
   \label{fig:banner-d}
   } \\[-1em]
   \subfigure[]{
   \includegraphics[width=0.22\textwidth]{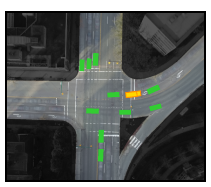}
   \label{fig:banner-e}
   }
   \subfigure[]{
   \includegraphics[width=0.22\textwidth]{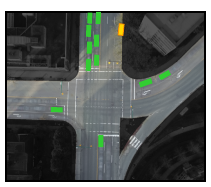}
   \label{fig:banner-f}
   }
   \subfigure[]{
   \includegraphics[width=0.22\textwidth]{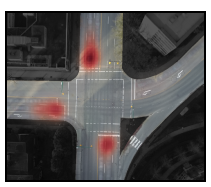}
   \label{fig:banner-g}
   }
   \subfigure[]{
   \includegraphics[width=0.22\textwidth]{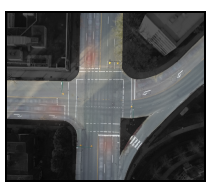}
   \label{fig:banner-h}
   }
   \vspace*{-4mm}
    \caption{\method{} applied to a diffusion model of non-infracting static vehicle placements (i.e.~a set of oriented rectangles) for the efficient initialization of autonomous vehicle planning simulators (see \citet{zwartsenberg2023conditional} for a similar model and full problem description). The top row show samples (green ``cars'') that are not colliding (non-overlapping) and not off-road (stay within the unshaded area of road surfaces) from a baseline diffusion model improved by \method{}.  The second row shows the kind of infractions our oracle identifies as not being in the support of the true distribution. \subref{fig:banner-e} shows a collision (yellow overlapping cars) and \subref{fig:banner-f} shows an off-road car in yellow.
    \subref{fig:banner-g} and \subref{fig:banner-h} graphically illustrate the reduction in infractions per unit area before and after  \method{} is applied to the baseline model (both plots are normalized to the same maximum value).  Quantitative results corresponding to this plot appear later in \cref{tab: 2nd experiment}.}
    \label{fig:banner}
\end{figure*}

We focus specifically on learning with constraints in score-based models.
What we reveal in this study is a \textit{necessary condition}, essential for the accurate functioning of classifier guidance in this problem domain, which, to the best of our knowledge, has been surprisingly overlooked until now.
Following recent findings on discriminator guidance in diffusion processes \citep{kim2022refining}, we introduce a new methodology for classifier guidance.
Our approach involves training and employing a series of differentiable classifiers, trained on synthetic samples generated from a sequence of classifier-guided diffusion models and labeled by the oracle.
The resulting sequence of multiply classifier-guided diffusion models effectively reduce the rejection rate while empirically maintaining a competitive probability mass assigned to validation samples. 
We demonstrate that comparable performance can be achieved with a reduced computational overhead by distilling the sequence of classifiers.

We evaluate our proposed methodology, which we call \textbf{Gen}erative modeling with \textbf{neG}ative examples (\method{})
on several problems, including modeling motion capture sequence data in a way that eliminates ground plane violations, and static traffic scene vehicle arrangements that avoid collisions and off-road placements.

\section{Background}

\subsection{Score-based Diffusion Models}\label{background:score-based}
Score-based diffusion models \citep{sohl2015deep, song2019generative, ho2020denoising, song2021scorebased}, also referred to as diffusion models (DMs) are a class of generative models that are defined through a stochastic process which gradually adds noise to samples from a data distribution $q_0(\rvx_0)$, such that when simulated forward from $t=0$ the marginal distribution at time $T$ is $q_T(\rvx_T) \approx \pi(\rvx_T)$ for some known $\pi(\rvx_T)$ typically equal to $\gN(\vzero, \mI)$. This is known as the ``forward process'' and is formulated as an SDE
\begin{equation}
    d\rvx_t = f(\rvx_t, t) dt + g(t) d \rvw, \quad \rvx_0 \sim q_0(\rvx_0),
    \label{eq:forward-process}
\end{equation}
where $f$ and $g$ are predefined drift and diffusion coefficients of $\rvx_t$ and $\rvw$ is the standard Wiener process.
DMs define another stochastic process known as the ``reverse process'' defined as
\begin{equation}
    d \rvx_t = [f(\rvx_t, t) - g(t)^2 s_\theta(\rvx_t; t)] dt + g(t) d\bar{\rvw}, \, \rvx_T \sim \pi(\rvx_T),
    \label{eq:reverse-process}
\end{equation}
where $\bar{\rvw}$ is the infinitesimal reverse time and reverse Wiener process, respectively. If $s_\theta$ matches the score function of the marginals of the forward process, the terminal distribution of the reverse process coincides with $q_0(\rvx_0)$~\citep{anderson1982reverse}.
Formally,
\begin{equation}
    s_\theta(\rvx_t; t) = \nabla_{\rvx_t} \log q_t(\rvx_t) \Rightarrow p_\theta(\rvx_0; 0) = q_0(\rvx_0),
\end{equation}
where $p_\theta(\rvx_t; t)$ is the marginal distribution of the reverse process.

In order to approximate the score function $\nabla_{\rvx_t} \log q_t(\rvx_t)$, DMs minimize the following score matching objective function~\citep{hyvarinen2005estimation,vincent2011connection,song2019generative}:
\begin{equation}
    \mathcal{L}^{\text{DM}}_\theta = \meanp{t,\rvx_0,\rvx_t} {\gamma_t \norm{s_\theta(\rvx_t; t) - \nabla_{\rvx_t} \log q(\rvx_t | \rvx_0)}^2},
    \label{eq:score-matching}
\end{equation}
where $\rvx_0 \sim q(\rvx_0)$, $\rvx_t \sim q(\rvx_t | \rvx_0)$, $t$ is sampled from a distribution over $[0, T]$, and $\gamma_t$ is a positive weighting term. Importantly, the Wiener process in \cref{eq:forward-process} allows direct sampling from the marginals of the forward distributions~\citep{song2021scorebased}, i.e. $q(\rvx_t | \rvx_0) = \gN(\alpha_t \rvx_0, \sigma_t)$, with $\alpha_t$ and $\sigma_t$ determined by the drift and diffusion coefficients in \cref{eq:forward-process}. This formulation moreover allows the evaluation of the conditional score function ($\nabla_{\rvx_t} \log q(\rvx_t | \rvx_0)$) in closed form.

Many of the DMs reported in the literature operate on discrete time steps~\citep{ho2020denoising,nichol2021improved}, and can be considered as particular discretizations of the presented framework.

In the remainder of this paper we use $q$ to denote the forward process, $s_\theta$ for the score function of the reverse process and $p_\theta$ as the distribution generated by running \cref{eq:reverse-process} backward in time. This applies to the marginals, conditionals, and posteriors as well. Furthermore, to reduce notational clutter throughout the rest of the paper, we omit the explicit mention of $\theta$ and $\phi$ and $t$ when their meaning is evident from the context.

\subsection{Classifier Guidance} \label{sec:background:classifier-guidance}

A distinctive and remarkable property of DMs is the ability to utilize an unconditional diffusion model to draw samples from its class-conditional distributions at inference time without requiring re-training or fine-tuning \citep{Dhariwal2021diffusion,song2021scorebased}. However, doing so typically utilizes a time-dependent classifier $q(y | \rvx_t) = \int q(y | \rvx_0) q(\rvx_0 | \rvx_t)\,d\rvx_0$
(alternative approaches include \citep{wu2023practical}).
Here, $q(y|\rvx_0)$ is a traditional classifier, that predicts the class probabilities for each $y$ given a datum $\rvx_0$ from the dataset. While $q(y | \rvx_t)$ classifies a noisy datum $\rvx_t$ sampled from $q_t(\rvx_t) = \int q(\rvx_t | \rvx_0) q(\rvx_0) \,d\rvx_0$.

Classifier guidance follows from the identity $\nabla_{\rvx_t}\log q(\rvx_t | y) = \nabla_{\rvx_t}\log q(\rvx_t) + \nabla_{\rvx_t}\log q(y | \rvx_t)$. Since the score function of the DM $s_\theta(\rvx_t; t) \approx \nabla_{\rvx_t}\log q(\rvx_t)$, we have
\begin{equation}
    s_\theta(\rvx_t | y; t) = s_\theta(\rvx_t; t) + \nabla_{\rvx_t}\log q(y | \rvx_t).
\end{equation}

\paragraph{Binary classification}
A special case of the above classifier guidance that we use in this paper is when there are only two classes. We provide here a brief overview of such a binary classification task and the notation associated with it.
Let $q(\rvx | y=1)$ and $q(\rvx | y=0)$ be the distribution of positive and negative examples, respectively. Let $\alpha = q(y = 1)$ and $1 - \alpha = q(y = 0)$ be the prior probabilities $q(y)$ of positive and negative examples. We then have $q(\rvx) = \alpha q(\rvx_t | y=1) + (1 - \alpha) q(\rvx_t | y=0)$. A binary classifier $C_\phi: \gX, [0, T] \rightarrow  [0, 1]$, can then be trained to approximate $q(y=1|\rvx_t)$ by minimizing the expected cross-entropy loss
\begin{equation}
\begin{split}
    \gL_{\phi}^{\text{CE}} =
    - \E_{t, \rvx_t} \Bigr[ q(y=1|\rvx_t) \log \cls_\phi(\rvx_t; t) + \\
    q(y=0|\rvx_t) \log (1 - \cls_\phi(\rvx_t; t)) \Bigr],
\end{split}
    \label{eq:cls-cross-entropy}
\end{equation}
where $\rvx_t \sim q(\rvx_t)$. Minimizing the cross-entropy loss between the classifier output and the true label is equivalent to minimizing the KL divergence between the classifier output and the Bayes optimal classifier \citep[Chapter~4]{sugiyama2012density}.
Therefore, \cref{eq:cls-cross-entropy} is minimized when
\begin{equation}
    \begin{split}
    \cls_{\phi^*}(\rvx_t; t) &= q(y = 1 | \rvx_t)\\
    &= \frac{\alpha q(\rvx_t | y=1)}{\alpha q(\rvx_t | y=1) + (1 - \alpha) q(\rvx_t | y=0)}.
    \label{eq:cls-optimal}
    \end{split}
\end{equation}
Hence, $s_\theta(\rvx_t | y=1; t) = s_\theta(\rvx_t; t) + \nabla_{\rvx_t} \log \cls_{\phi^*}(\rvx_t; t)$.
Note that this minimizer critically depends on the class prior probabilities $\alpha$ and $1 - \alpha$. \method{} works by ensuring that these are properly accounted for.

\section{Methodology}

In this section, we describe \textbf{Gen}erative modeling with \textbf{neG}ative examples (\method{}), a method for guiding the sampling of diffusion models to satisfy the constraints imposed by an oracle function.
\method{} has two stages. It starts by training a DM on available training data following standard DM training procedures (e.g. \cref{background:score-based}) without utilizing the oracle. We refer to this model as the ``baseline DM'' throughout.
In the second stage of \method{}, we draw samples from this baseline DM, label them using the oracle, and train a binary classifier using those samples, which we later use for guidance.
Next, we use the obtained classifier to guide our baseline DM, and the combination of both constitutes a new generative model.  \method{} establishes this combined model as a new DM and then repeats the process of sampling, classifying using the oracle, and training a time dependent classifier to form yet another model. 
Refinement using this iterative process can be repeated until the desired performance is obtained. We refer to this type of refinement as ``stacking''. Optionally, if better computational performance is desired, the stacked model can be distilled into a new model at any desired stage. 
As mentioned before, an important feature of \method{} is to properly account for the prior class probabilities $\alpha$ and $1 - \alpha$ in the training of all classifiers, which is formalized later in this section and demonstrated later in \cref{sec:experiments}. An overview of \method is shown in \cref{fig:gen-neg-overview} and \cref{alg: iterative training}.

\paragraph{Problem formulation and notation} Let $\gD = \{\rvx^i\}_{i=1}^N \sim q(\rvx)$ be a dataset of i.i.d. samples from an unknown data distribution $q$. Furthermore, let $\oracle: \gX \rightarrow \{0, 1\}$ be an oracle function that assigns each point in the data space $\gX$ a binary label. In other words, this oracle partitions the data space into two disjoint sets $\gX = \Omega \cup \Omega^\complement$ such that $\oracle(\rvx) = \1_{\Omega}(\rvx)$.
Moreover, assume $\gD \subseteq \Omega$ i.e., all training examples satisfy the oracle constraints.
Our objective is to learn a score-based diffusion model that (i) maximizes the likelihood of $\gD$ and (ii) avoids allocating probability to $\Omega^\complement$.

\subsection{Bayes Optimal Classifier Guidance for Diffusion models} \label{sec:method:classifier}
The core component of \method{} is in the second stage where a \emph{Bayes optimal} diffusion-time dependent classifier that discriminates between positive and negative samples in $\Omega$ and $\Omega^\complement$ respectively, is used to guide the baseline DM.
There are two main questions that \method{} answers. (i) Which data and class distribution should the classifier be Bayes optimal with respect to such that classifier guidance does not modify the sampling distribution in the oracle approved region? (ii) How can one train such a classifier?

Classifier guidance in score-based models is typically used to generate samples from a specific pre-defined class on the training dataset. As such, the classifier and generative model share the same training distribution.
In our case, however, there is no pre-defined dataset of positive and negative classes. Even the training dataset $\gD$ only includes samples from one class, since they all satisfy the oracle constraints.
Despite that, as we show later in \cref{sec:experiments}, classifier guidance using a Bayes optimal classifier for the binary classification task on the fully-synthetic data generated by the same baseline DM leads to (i) zero infraction and (ii) improved likelihood estimation on the data distribution, including a \textit{held-out test set}.
\begin{figure}[t]
    \centering
    \includegraphics[trim={27 10 35 15},clip,width=\linewidth]{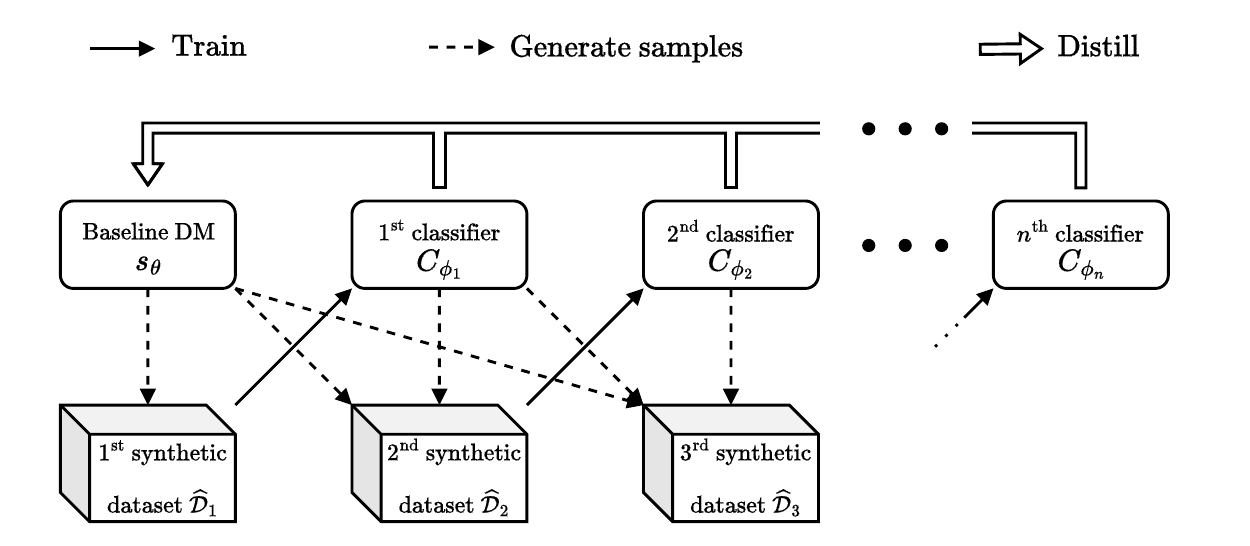}
    \caption{Overview of \method{}. The process begins with a baseline diffusion model. In each iteration, a synthetic training dataset is generated from the current model and labeled by the oracle function $\oracle$. A time-dependent classifier is trained on this dataset and then used to update the model by guiding it towards the positive support region (see \cref{eq:gen-neg-score}). The guided model (with multiple classifiers) can be distilled into a new, improved baseline diffusion model at any iteration by minimizing \cref{eq:distillation}.}
    \label{fig:gen-neg-overview}
\end{figure}

The goal of \method{} is therefore to solve this classification task on synthetic data. Formally, the data is distributed as $p_\theta(\rvx_0)$, the labels are $y = \oracle(\rvx_0)$, and the noise distribution is $p_\theta(\rvx_t | \rvx_0)$.
One can obtain the Bayes optimal classifier for this task using a cross-entropy objective similar to \cref{eq:cls-cross-entropy} in which $q$ is replaced by $p_\theta$. This objective can be equivalently written as
\begin{equation}
\begin{split}
    \gL_{\phi, \theta}^{\text{CE}} = - \E_{t, \rvx_0, \rvx_t} \Bigr[\oracle{}(\rvx_0) \log \cls_\phi(\rvx_t; t) + \\
    (1 - \oracle{}(\rvx_0)) \log (1 - \cls_\phi(\rvx_t; t)) \Bigr],
\end{split}
    \label{eq:cls-cross-entropy-synth}
\end{equation}
where $(\rvx_0, \rvx_t) \sim p_\theta(\rvx_0, \rvx_t)$ (see the Appendix for details). In order to avoid the computational cost of sampling from the baseline DM to compute this objective, \method{} approximates $p_\theta(\rvx_0, \rvx_t) \approx p_\theta(\rvx_0) q(\rvx_t | \rvx_0)$. This is similar to the approximation in \citet{kim2022refining}. The objective function of \method{} is therefore
\begin{equation}
\begin{split}
    \gL_\phi^{\text{cls}} = - \E_{t, p_\theta(\rvx_0)} \Bigr[ \E_{q(\rvx_t | \rvx_0)} \Big[ \oracle{}(\rvx_0) \log \cls_\phi(\rvx_t; t) + \\
    (1 - \oracle{}(\rvx_0)) \log (1 - \cls_\phi(\rvx_t; t)) \Big] \Bigr].
\end{split}
    \label{eq:gen-neg-true-objective}
\end{equation}
For notational simplicity, we drop the dependence of $\gL_\phi^{\text{cls}}$ on $\theta$, including in the equation above. Once trained, the classifier is incorporated into the baseline DM by
\begin{equation}
    s_{\theta, \phi}(\rvx_t; t) = s_\theta(\rvx_t; t) + \nabla_{\rvx_t} \log C_\phi(\rvx_t; t).
    \label{eq:gen-neg-score}
\end{equation}
We denote the marginal distributions generated by the oracle-assisted DM, implicitly defined through \cref{eq:gen-neg-score} as $p_{\theta,\phi}(\rvx_t; t)$.

\paragraph{Training the classifier}
Training the classifier in our approach presents a noteworthy challenge due to the major label imbalance within the synthetic dataset $\synth$ generated by the model. This imbalance emerges when the baseline is already close to the target distribution, resulting in a scarcity of negative examples. 
Meanwhile, these negative examples play a crucial role in guiding the model at the boundary between positive and negative examples, where the model requires the most guidance.

\method{} addresses this challenge by sampling a balanced dataset $\synth$ from the model, ensuring the same number of positive and negative examples.
However, this changes the class prior probabilities from the true marginal distribution over labels $\alpha$ and $1 - \alpha$ which in turn changes the optimal classifier the cross-entropy objective targets  (see \cref{eq:cls-optimal}).
We show evidence of this happening in \cref{fig:exp-checkerboard-samples}.
\method{} crucially employs importance sampling in the classifier's training objective to rectify the  bias introduced by having to balance the dataset to achieve high classifier accuracy in training.

\begingroup
\crefname{equation}{Eq.}{Eqs.}
\Crefname{equation}{Eq.}{Eqs.}

\begin{algorithm}[!t]
    \caption{\method{}}\label{alg: iterative training}
    \begin{algorithmic}[1]
        \STATE {\bfseries Input:} dataset $\gD$, oracle $\oracle$, synthetic dataset size $N$
        \STATE $i \leftarrow 0$
        \STATE $\theta_i \leftarrow \argmin_\theta \gL^{\text{DM}}_\theta$ \COMMENT{train baseline DM, \cref{eq:score-matching}}
        \STATE ${s_i} \leftarrow s_{\theta_i}(\rvx_t; t)$
        \REPEAT
            \STATE $\synth^+_i,\synth^-_i \leftarrow \varnothing$
            \REPEAT
              \STATE $\synth^+,\synth^- \leftarrow$ generate more samples  from DM with score function ${s_i}$ and label with $\oracle$
              \STATE $\synth^+_i \leftarrow \synth^+_i \cup \synth^+$, $\synth^-_i \leftarrow \synth^-_i \cup \synth^-$
            \UNTIL{$\min(|\synth^+_i|,|\synth^-_i|)<N$}
            \STATE $\alpha_i \leftarrow |\synth^+_i| / (|\synth^+_i| + |\synth^-_i|)$ \COMMENT{Estimate class prior probabilities for Bayes optimal classifier} 
            \STATE $\synth^+_i \leftarrow \subsample(N,\synth^+_i), \synth^-_i \leftarrow \subsample(N,\synth^-_i)$ \COMMENT{balance dataset for IS classifier training}
            \STATE $\phi_i \leftarrow \argmin_\phi \hat{\gL}^{\text{cls}}_\phi(\alpha_i, \synth^+_i, \synth^-_i)$ \COMMENT{train guidance classifier, \cref{eq: classifier loss}}
            \STATE $i \leftarrow i+1$
            \IF{distill}
            \STATE $\psi \leftarrow \argmin_\psi \gL^{\text{dtl}}_\psi$ \COMMENT{See \cref{eq:distillation}}
            \STATE ${s_i} \leftarrow  s_{\psi}(\rvx_t; t)$ 
            \ELSE
                \STATE ${s_i} \leftarrow {s_{i-1}} + \nabla_{\rvx_t}\log C_{\phi_i}(\rvx_t; t)$ \COMMENT{See \cref{eq:gen-neg-score}}
            \ENDIF
        \UNTIL{done}
        \STATE {\bfseries Output:} DM score function $s_i$
    \end{algorithmic}
\end{algorithm}

\endgroup 

Given a balanced dataset $\synth = \synth^+ \cup \synth^-$ where $\synth^+ \sim p(\rvx_0 | y=1)$, $\synth^- \sim p(\rvx_0 | y=0)$, $N = |\synth^+| = |\synth^-|$, and $\alpha = p_\theta(y=1)$,
\begin{align}
    \label{eq: classifier loss}
    &\hat{\gL}_{\phi}^{\text{cls}}(\alpha, \synth^+, \synth^-) \nonumber\\
    :=& \frac{1}{N} \sum_{\rvx_0 \in \synth^+} \alpha \meanp{q(\rvx_t|\rvx_0)}{- \log \cls_\phi(\rvx_t; t)}\nonumber\\
    + &\frac{1}{N} \sum_{\rvx_0 \in \synth^-} (1 - \alpha) \meanp{q(\rvx_t|\rvx_0)}{- \log (1 - \cls_\phi(\rvx_t; t))},
\end{align}
is an importance sampling estimator of the objective function in \cref{eq:gen-neg-true-objective}; proof in \cref{app:objective-is}.

\subsection{Iterative Training by Stacking Classifiers} With an optimal classifier minimizing \cref{eq:cls-cross-entropy-synth}, the DM with score function $s_{\theta,\phi}$ will have improved likelihood and zero infraction (see \cref{thm:method:improved-model}). However, in practice the trained classifier is only an estimate because learning the true decision boundary would require an infeasible synthetic dataset size and optimization budget, thus infractions may not be entirely eliminated.

To alleviate this problem, we note that once the classifier is trained, the guided score function $s_{\theta,\phi}(\rvx)$ itself defines a new diffusion model.
Consequently, we can employ a similar procedure to train a new classifier on $s_{\theta,\phi}$, aiming to further lower its infraction rate. This iterative approach involves training successive classifiers and incorporating them into the model, progressively enhancing its performance and reducing the infraction rate.

\begin{figure*}[t]
    \vspace*{-5mm}
    \centering
    \subfigure[]{
    \includegraphics[width=0.2\textwidth]{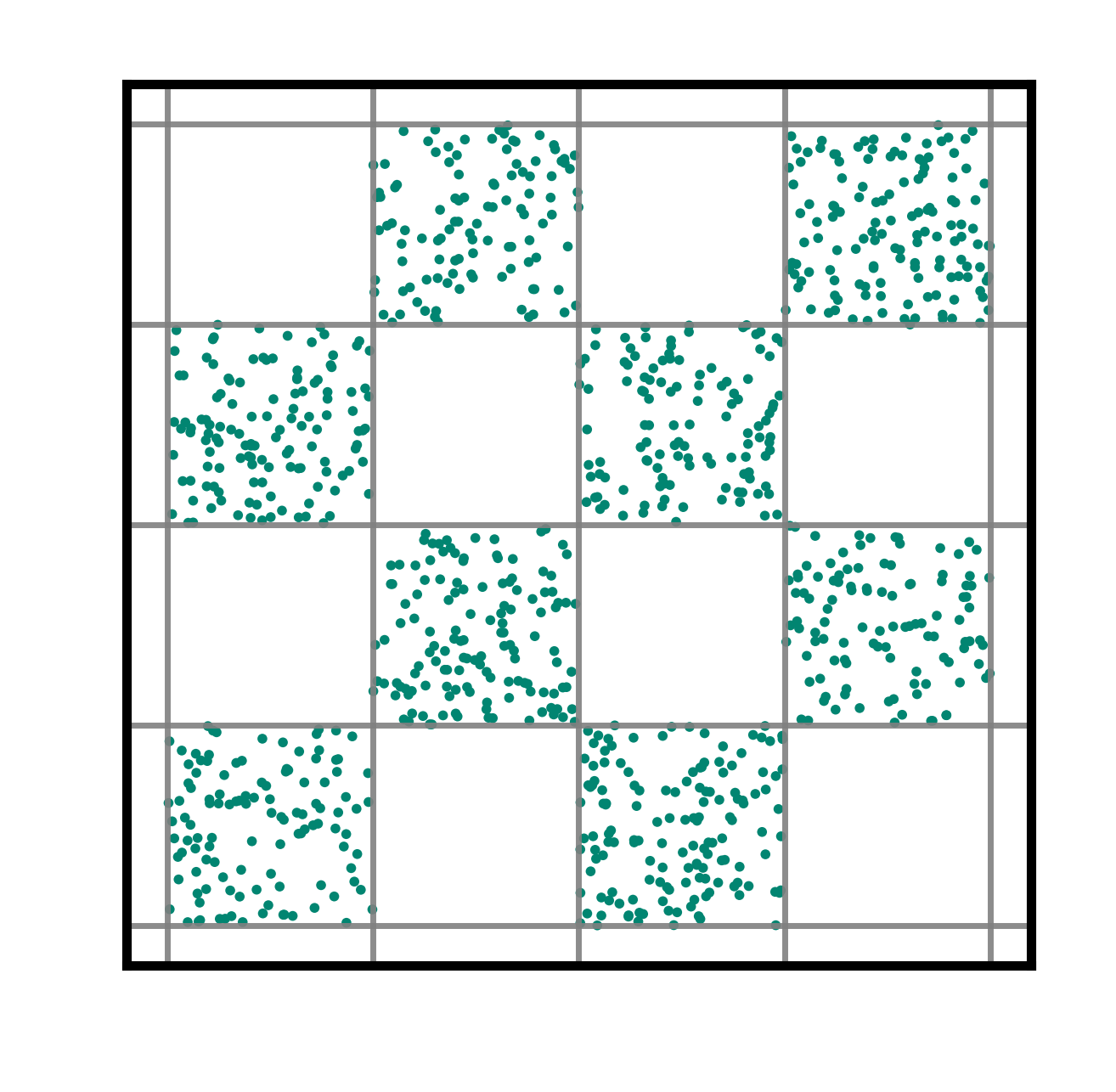}
    \vspace*{-4mm}
    \label{fig:exp-checkerboard-samples:a}}
    \hfill
    \subfigure[]{
    \includegraphics[width=0.2\textwidth]{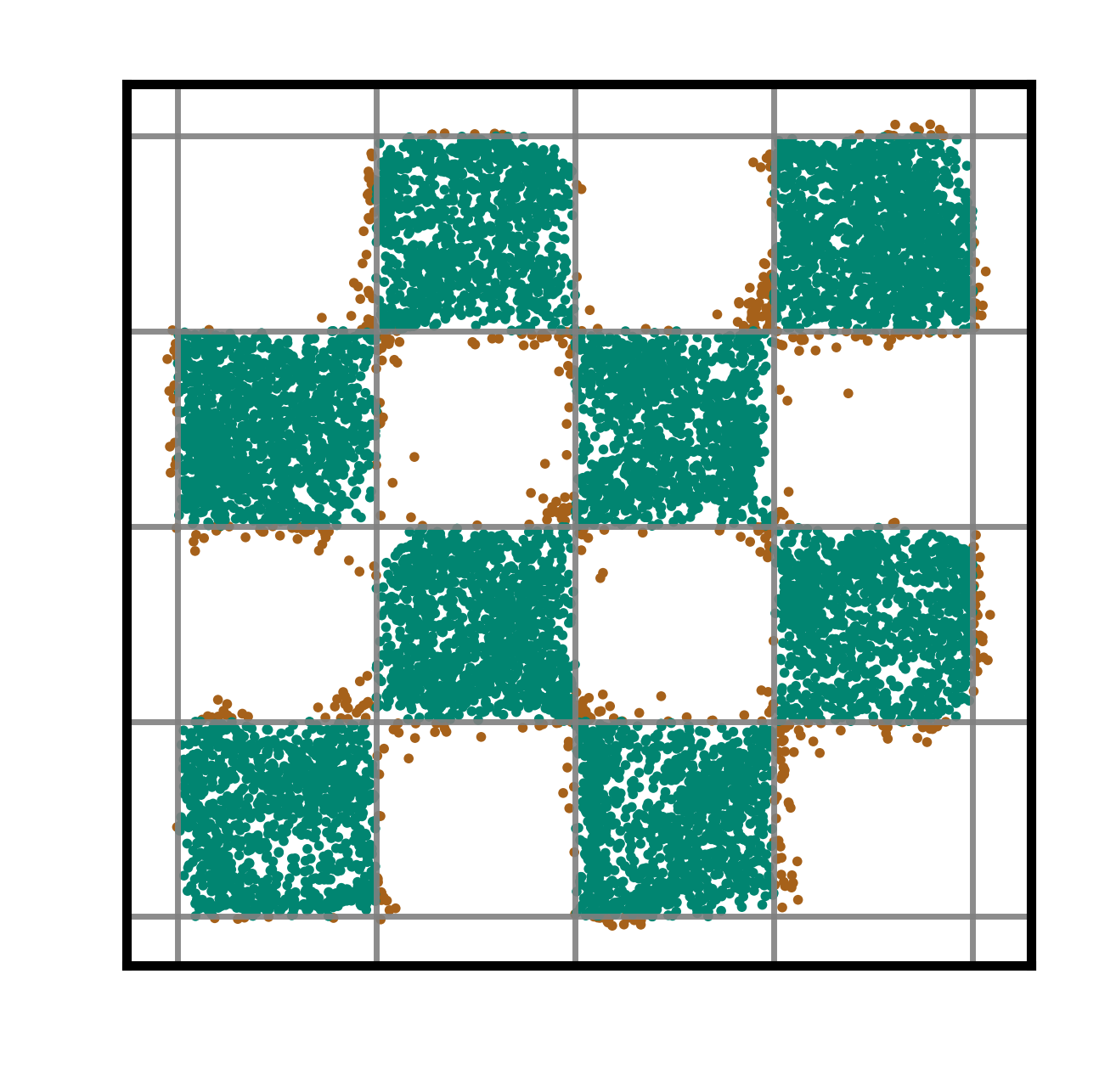}
    \vspace*{-4mm}
    \label{fig:exp-checkerboard-samples:b}
    }
    \hfill
    \subfigure[]{
    \includegraphics[width=0.2\textwidth]{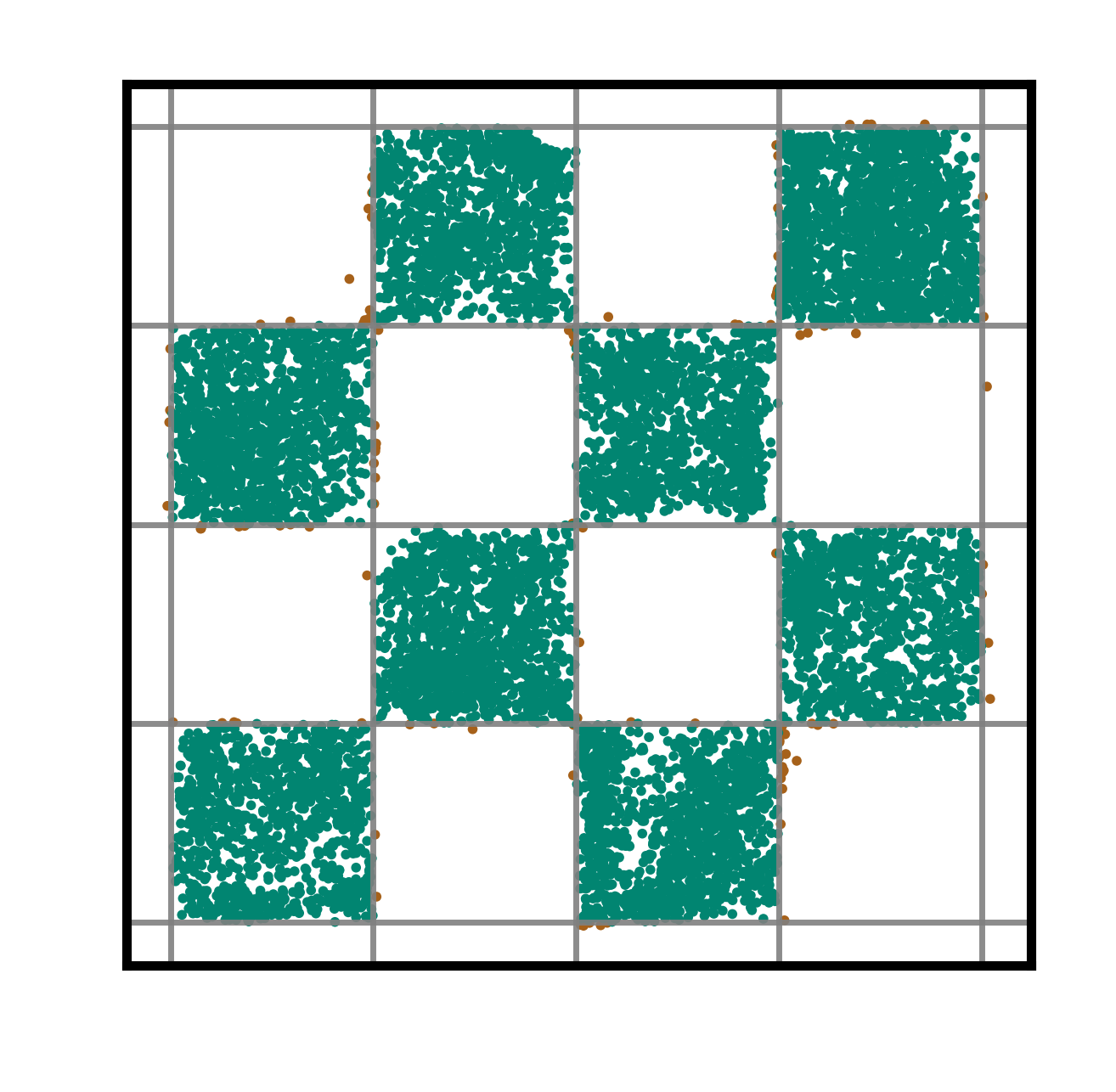}
    \vspace*{-4mm}
    \label{fig:exp-checkerboard-samples:c}
    }
    \hfill
    \subfigure[]{
    \includegraphics[width=0.2\textwidth]{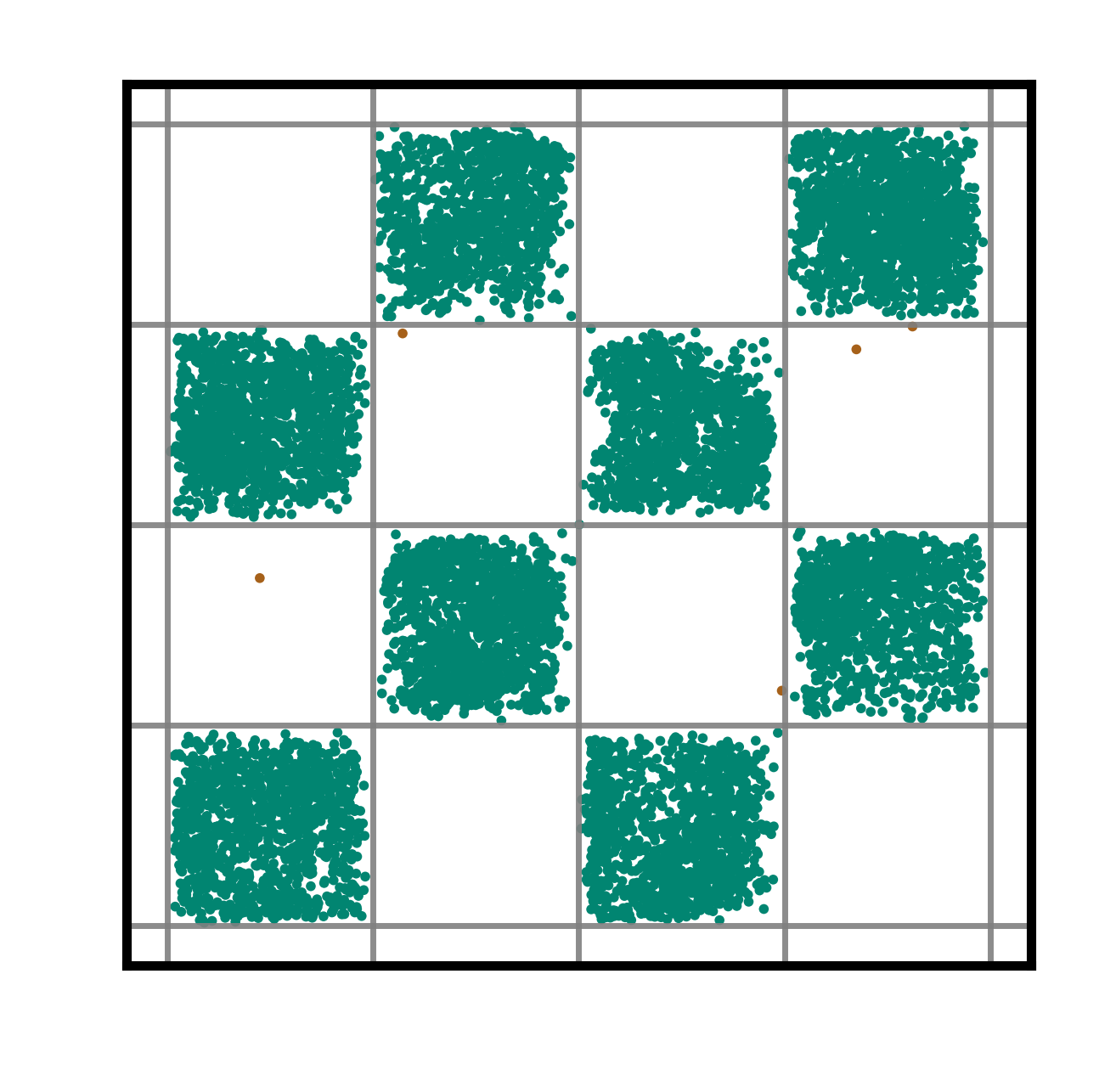}
    \vspace*{-4mm}
    \label{fig:exp-checkerboard-samples:d}}
    \vspace*{-4mm}
    \caption{Samples from the checkerboard experiment. Samples with infraction (i.e. $\oracle(\rvx) = 0$) are shown in brown.~\subref{fig:exp-checkerboard-samples:a} The baseline training dataset; \subref{fig:exp-checkerboard-samples:b} baseline DM; \subref{fig:exp-checkerboard-samples:c} first iteration of \method{} using a Bayes optimal classifier trained on a balanced dataset and correct $\alpha$; \subref{fig:exp-checkerboard-samples:d} a classifier trained on a balanced dataset without employing importance sampling results in suboptimal density estimation. We see samples are suboptimally pushed inwards from the boundaries.  We also have observed that validation ELBOs in these kinds of cases are significantly worse. }
    \vspace*{-4mm}
    \label{fig:exp-checkerboard-samples}
\end{figure*}

\subsection{Model Distillation}
Adding a stack of classifiers to the model linearly increases its computational cost, since each new classifier requires a forward and backward pass each time the score function is evaluated. To avoid this, we show that it is possible and sometimes beneficial to distill the classifiers into a combined diffusion model.

Let $s_{\theta, \mPhi}$ be a ``teacher model'' consisting of a baseline model $s_{\theta}$ and a stack of classifiers $\{\cls_\phi\}_{\phi \in \mPhi}$. We distill $s_{\theta, \mPhi}$ into a new ``student model'' $s_{\psi}^{\text{dtl}}$, possibly with the same architecture as the baseline model, by minimizing the following distillation loss
\begin{equation}
    \mathcal{L}_\psi^{\text{dtl}} = \meanp{\mathbf{x}_0\sim q(\mathbf{x}_0), t} {\gamma_t \norm{s_{\theta, \mPhi}(\mathbf{x}_t; t) - s_\psi^{\text{dtl}}(\mathbf{x}_t; t)}^2},
    \label{eq:distillation}
\end{equation}
where $\gamma_t$ is the weight term, similar to the training objective of diffusion models. Here, $\mathcal{L}^{\text{dtl}}$ makes the outputs of the student model match that of the teacher.
Algorithm~\ref{alg: iterative training} summarises \method{}.

\subsection{Theory}
Here we provide the theoretical grounding for why the classifier \method{} targets results in improved likelihood estimation and avoids violating the constraints.
\begin{theorem}
    \label{thm:method:cls-guidance}
    Let $p_\theta(\rvx)$ be the distribution learned by a baseline DM with marginal distributions denoted by $p_\theta(\rvx_t; t)$ and let $p_\theta(y=1 | \rvx_0)= \oracle(\rvx_0)$. Further, let $\cls_{\phi^*}: \gX, [0, T] \rightarrow [0, 1]$ be the Bayes-optimal time-dependent binary classifier arising from perfectly optimizing the following cross-entropy objective
    \begin{equation}
    \begin{split}
        \gL_{\phi, \theta}^{\text{CE}}= -\E_t \Big[ \E_{p_\theta(\rvx_0, \rvx_t)} \big[ \oracle(\rvx_0) \log \cls_\phi(\rvx_t; t) + \\
        (1 - \oracle(\rvx_0)) \log (1 - \cls_\phi(\rvx_t; t)) \big] \Big]
        \label{eq:cls-time-cross-entropy}
    \end{split}
    \end{equation}
    then
    \begin{equation}
    \begin{split}
        \nabla_{\rvx_t} \log p_\theta(\rvx_t | y=1 ; t) = \nabla_{\rvx_t} \log p_\theta(\rvx_t; t) +  \\
        \nabla_{\rvx_t} \log \cls_{\phi^*}(\rvx_t; t).
    \end{split}
    \end{equation}
\end{theorem}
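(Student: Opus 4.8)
The plan is to reduce the statement to two ingredients: (i) identifying the Bayes-optimal classifier $\cls_{\phi^*}$ as the time-$t$ posterior $p_\theta(y=1\mid\rvx_t;t)$, in direct analogy with \cref{eq:cls-optimal}, and (ii) a Bayes-rule manipulation that turns the gradient of the log-posterior into the difference of the class-conditional and unconditional score functions. Rearranging that difference immediately yields the claimed identity.

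First I would characterize the minimizer of \cref{eq:cls-time-cross-entropy}. Here the joint is the model's noised distribution $p_\theta(\rvx_0,\rvx_t)=p_\theta(\rvx_0)\,q(\rvx_t\mid\rvx_0)$, and since $\oracle(\rvx_0)\in\{0,1\}$ with $p_\theta(y=1\mid\rvx_0):=\oracle(\rvx_0)$, the tower property (conditioning on $\rvx_t$) rewrites the loss as
\begin{equation*}
  \gL_\phi^{\text{CE}} = -\,\meanp{t}{\meanp{p_\theta(\rvx_t;t)}{p_\theta(y{=}1\mid\rvx_t;t)\log\cls_\phi(\rvx_t;t) + p_\theta(y{=}0\mid\rvx_t;t)\log\bigl(1-\cls_\phi(\rvx_t;t)\bigr)}},
\end{equation*}
where $p_\theta(y{=}1\mid\rvx_t;t)=\meanp{p_\theta(\rvx_0\mid\rvx_t;t)}{\oracle(\rvx_0)}=\int \oracle(\rvx_0)\,p_\theta(\rvx_0\mid\rvx_t;t)\,d\rvx_0$. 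For each fixed $(t,\rvx_t)$ the integrand is, up to sign, the cross-entropy of a Bernoulli$(\cls_\phi(\rvx_t;t))$ against Bernoulli$(p_\theta(y{=}1\mid\rvx_t;t))$, so it is minimized pointwise at $\cls_\phi(\rvx_t;t)=p_\theta(y{=}1\mid\rvx_t;t)$ — either by differentiating in the scalar argument and setting to zero, or by the non-negativity of KL exactly as in \cref{app:bayes-optimal-kl}. Hence $\cls_{\phi^*}(\rvx_t;t)=p_\theta(y{=}1\mid\rvx_t;t)$.

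Second, I would expand this posterior with Bayes' rule. Because $y$ is a deterministic function of $\rvx_0$ and the forward noising kernel $q(\rvx_t\mid\rvx_0)$ does not depend on $y$, the decomposition $p_\theta(\rvx_0)=p_\theta(y{=}1)p_\theta(\rvx_0\mid y{=}1)+p_\theta(y{=}0)p_\theta(\rvx_0\mid y{=}0)$ propagates through the kernel to give $p_\theta(\rvx_t;t)=p_\theta(y{=}1)p_\theta(\rvx_t\mid y{=}1;t)+p_\theta(y{=}0)p_\theta(\rvx_t\mid y{=}0;t)$, and therefore
\begin{equation*}
  \cls_{\phi^*}(\rvx_t;t)=p_\theta(y{=}1\mid\rvx_t;t)=\frac{p_\theta(\rvx_t\mid y{=}1;t)\,p_\theta(y{=}1)}{p_\theta(\rvx_t;t)}.
\end{equation*}
Taking logarithms and then $\nabla_{\rvx_t}$, the constant $\log p_\theta(y{=}1)$ drops out, so $\nabla_{\rvx_t}\log\cls_{\phi^*}(\rvx_t;t)=\nabla_{\rvx_t}\log p_\theta(\rvx_t\mid y{=}1;t)-\nabla_{\rvx_t}\log p_\theta(\rvx_t;t)$, which is the asserted equation after moving $\nabla_{\rvx_t}\log p_\theta(\rvx_t;t)$ to the left.

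The part I would treat as the crux, rather than a routine step, is the first one: justifying the passage from the joint-expectation form of the loss to the posterior form (Fubini / tower property under the stated integrability) and that the pointwise minimizer is the conditional mean, including the boundary behavior when $p_\theta(y{=}1\mid\rvx_t;t)\in\{0,1\}$. One also needs $p_\theta(y{=}1)>0$ for $p_\theta(\rvx_t\mid y{=}1;t)$ and the Bayes-rule step to be well defined (otherwise the statement is vacuous), and — as throughout the paper — implicit smoothness of the marginals so that the scores exist and differentiation under the integral sign is licit. Everything else is bookkeeping.
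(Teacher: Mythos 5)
Your proposal is correct and follows essentially the same route as the paper's proof: both use the tower property to rewrite the loss as a pointwise Bernoulli cross-entropy whose minimizer is the time-$t$ posterior $p_\theta(y{=}1\mid\rvx_t;t)$, express that posterior via Bayes' rule (the paper writes the denominator explicitly as the mixture $\alpha p_\theta(\rvx_t\mid y{=}1;t)+(1-\alpha)p_\theta(\rvx_t\mid y{=}0;t)$, and separately verifies that the prior $\alpha$ is time-independent, which matches your remark that the class decomposition propagates through the noising kernel), and then cancel $\nabla_{\rvx_t}\log p_\theta(\rvx_t;t)$ after taking the log-gradient. Your added caveats on integrability, $p_\theta(y{=}1)>0$, and smoothness are reasonable housekeeping that the paper leaves implicit.
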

In other words, by using a Bayes-optimal binary classifier for guidance, we target exactly the score function of positive (oracle-approved) examples, without modifying the underlying distribution in the oracle-approved region.
\begin{corollary}
    \label{thm:method:improved-model}
    For an optimal classifier $\cls_{\phi^*}$, 
    \begin{enumerate}
        \item $p_{\theta, \phi^*}(\rvx_t) = p_{\theta}(\rvx_t | y=1)$,
        \item There is no mass on $\Omega^\complement$, i.e.  $\int_{\rvx\in\Omega^\complement} p_{\theta, \phi^*} (\rvx) = 0$,
        \item For any dataset $\gD \subseteq \Omega$, $p_{\theta, \phi^*}(\gD) \geq p_{\theta}(\gD)$.
    \end{enumerate}
\end{corollary}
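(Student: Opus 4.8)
The plan is to obtain all three items as immediate consequences of \cref{thm:method:cls-guidance} together with Bayes' rule applied to the degenerate oracle ``classifier'' $p_\theta(y=1\mid\rvx_0)=\oracle(\rvx_0)$. By \cref{thm:method:cls-guidance}, the guided score function that \method{} uses, namely $s_{\theta,\phi^*}(\rvx_t;t)=s_\theta(\rvx_t;t)+\nabla_{\rvx_t}\log\cls_{\phi^*}(\rvx_t;t)$, equals $\nabla_{\rvx_t}\log p_\theta(\rvx_t\mid y=1;t)$, the score of the forward-process marginals started from $p_\theta(\rvx_0\mid y=1)$. Substituting this drift into the reverse SDE \cref{eq:reverse-process} and invoking the same reverse-time SDE correspondence \citep{anderson1982reverse} that underlies \cref{background:score-based}, the resulting reverse process has marginals $\tilde p_{\theta,\phi^*}(\rvx_t;t)=p_\theta(\rvx_t\mid y=1;t)$ for every $t$; evaluating at $t=0$ yields $p_{\theta,\phi^*}(\rvx)=p_\theta(\rvx\mid y=1)$, which is item~1.

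For items~2 and~3, write $\alpha:=p_\theta(y=1)=\int p_\theta(\rvx)\,\oracle(\rvx)\,d\rvx=\int_\Omega p_\theta(\rvx)\,d\rvx$ and assume $\alpha>0$ (otherwise the conditional is vacuous). Bayes' rule combined with $p_\theta(y=1\mid\rvx)=\oracle(\rvx)$ gives $p_{\theta,\phi^*}(\rvx)=p_\theta(\rvx\mid y=1)=\oracle(\rvx)\,p_\theta(\rvx)/\alpha$. For item~2, any $\rvx\in\Omega^\complement$ has $\oracle(\rvx)=0$, hence $p_{\theta,\phi^*}(\rvx)=0$ and therefore $\int_{\Omega^\complement}p_{\theta,\phi^*}(\rvx)\,d\rvx=0$. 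For item~3, since $\gD\subseteq\Omega$ each $\rvx^i$ satisfies $\oracle(\rvx^i)=1$, so $p_{\theta,\phi^*}(\rvx^i)=p_\theta(\rvx^i)/\alpha$; multiplying over the i.i.d.\ dataset gives $p_{\theta,\phi^*}(\gD)=\alpha^{-N}p_\theta(\gD)\ge p_\theta(\gD)$ because $\alpha\in(0,1]$, with equality exactly when the baseline DM already places no mass on $\Omega^\complement$.

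The only nontrivial step is item~1: it is the single place where one must pass from equality of reverse-SDE drifts (scores) and matching terminal law to equality of the entire generated distribution, i.e.\ invoke well-posedness of the reverse SDE and the Fokker--Planck/Anderson correspondence, rather than anything about the classifier itself. As in \cref{background:score-based}, this identity holds exactly only in the idealized regime where $p_\theta(\rvx_T\mid y=1;T)=\pi(\rvx_T)$ and the score is represented exactly, and I would flag this standing assumption explicitly. Given item~1, items~2 and~3 are routine manipulations that use only that $\oracle$ is $\{0,1\}$-valued and that $\alpha\le 1$.
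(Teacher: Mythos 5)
Your proposal is correct and follows essentially the same route as the paper: Theorem~\ref{thm:method:cls-guidance} plus the reverse-time SDE correspondence of \citet{anderson1982reverse} for item~1, then the identity $p_{\theta,\phi^*}(\rvx)=\oracle(\rvx)\,p_\theta(\rvx)/\int p_\theta(\rvx')\oracle(\rvx')\,d\rvx'$ with normalizer at most $1$ for items~2 and~3 (the paper argues the pointwise inequality and sums logs, you multiply over the dataset to get the factor $\alpha^{-N}\ge 1$ — the same computation). Your explicit caveats about $\alpha>0$ and the terminal-law matching $\pi(\rvx_T)$ are reasonable additions the paper leaves implicit, but they do not change the argument.
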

\cref{thm:method:improved-model} suggests that our \method{} methodology can improve the baseline DM in terms of both infraction rate and test dataset likelihood.

See the proofs for the \cref{thm:method:cls-guidance,thm:method:improved-model} in \cref{app:proof-thm,app:proof-corr}.

\section{Experiments}
\label{sec:experiments}
We evaluate \method{} on three datasets: a 2D checkerboard, collision avoidance in traffic scenario generation, and safety-guarded human motion generation.
In each experiment we report a likelihood-based metric on a held out dataset to measure distributional shifts and a form of infraction metric which measures faithfulness to the oracle. We release the source code for the checkerboard and motion generation experiments\footnote{\url{https://github.com/plai-group/gen-neg}}.

\subsection{Checkerboard}\label{sec:exp:toy}

\begin{figure*}[t]
    \centering
    \includegraphics[trim={0 165 0 0},clip,scale=0.72]{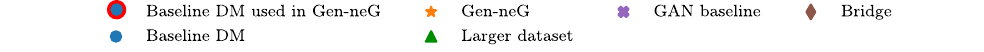}\\
    \includegraphics[trim={0 0 240 22},clip,scale=0.75]{figs/toy-infraction-both-captioned.pdf}\hfill
    \includegraphics[trim={0 0 444 22},clip,scale=0.75]{figs/toy-infraction-both-captioned.pdf}
    \includegraphics[trim={274 0 0 22},clip,scale=0.75]{figs/toy-infraction-both-captioned.pdf}
    \caption{Infraction, ELBO and MMD estimates from the checkerboard experiment.
    Dashed lines connect different iterations of the same method. For the baseline DM, it corresponds to training iterations. For \method{} it corresponds to different iterations of our algorithm.
    The plot shows (i) prolonged training of the \textit{baseline DM} reduces the infraction rate but leads to overfitting.
    (ii) Our experiments on \textit{larger datasets}, denoted by triangles in varying shades of green (up to 1000$\times$ larger than the original dataset; darker shades indicate larger datasets), maximum likelihood training even on substantially larger datasets is strongly outperformed by \method{}.
    (iii) Various iterations of \method{} consistently decrease the infraction rate while maintaining fidelity to the data distribution.
    (iv) Diffusion bridges perfectly achieve zero infraction rate and improved likelihood estimation. However, they require analytical access to constraints and are not generalizable to complex constraints.
    }
    \label{fig:exp-checkerboard-scores}
\end{figure*}

To develop some insight, we start by demonstrating the principles and performance of \method{} on a dataset of 2-dimensional points uniformly distributed on a checkerboard grid as shown in \cref{fig:exp-checkerboard-samples:a}.
We apply EDM \citep{karras2022elucidating}, a continuous-time DM, to this problem.
The training dataset only contains 1000 points.
This makes the model prone to over-fitting. As shown in \cref{fig:exp-checkerboard-scores} and further explored in \cref{app:overfitting}(blue dots), training the model for long causes strong overfitting.
We therefore stop training of the baseline DM before it starts overfitting measured by the evidence lower bound (ELBO) on a held-out validation set. \cref{fig:exp-checkerboard-samples:b} shows samples from this baseline DM and the blue dot with a red edge in \cref{fig:exp-checkerboard-scores} shows its measured performance.

\cref{fig:exp-checkerboard-samples:c} shows samples from the first iteration of \method{} and the orange stars in \cref{fig:exp-checkerboard-scores} show the metrics for the first five iterations of \method{}. These results show \method{} dramatically reduces the rate of infractions while still matching the data distribution for non-infracting regions.
\cref{fig:2d-ablation-imbalance} in the appendix shows distilling various \method{} models maintains a comparable performance.

We test the effectiveness \method{} against training on a larger dataset. The green triangles in \cref{fig:exp-checkerboard-scores} show the performance of models trained on datasets up to $10^6$ points ($1000\times$ larger than our original dataset). Even the first iteration of \method{} achieves significantly lower infraction rates compared to any of these models.
This also emphasizes the importance of negative samples, consistent with \citet{giannone2023learning}.

An alternative approach to learning constrained distributions with diffusion models is diffusion bridges \citep{liu2023learning} that provably produce no infraction. However, it requires analytical access to the constraints and quantities that are only tractable under very simple constraints. As such it is not applicable to our problem setting. Therefore, it should be treated as an upper bound to \method{}'s performance. As shown by the brown diamond in \cref{fig:exp-checkerboard-scores}, this method achieves zero infractions with a better ELBO.

\citet{kong2023data} proposed an algorithm for data redaction in GANs that is applicable to our oracle-based constraints. Since GANs do not provide ELBO estimates, we only compute MMD and infraction rates for this baseline. We train this model on our problem and choose the two checkpoints with the best value for either metric. The purple crosses in the right panel of \cref{fig:exp-checkerboard-scores} show our results. The MMD scores are significantly worse than those of diffusion-based models, including \method{}. Additionally, \method{} quickly outperforms this baseline in infraction rate.

The \method{} models in this experiment reach near-zero infraction rates. This makes the balanced synthetic dataset generation step slow. We explore an importance sampling-based approach to avoid this slowdown. Our approach and its results are presented in \cref{fig:app:faster-sampling} but we leave further exploration for future research.

\begin{table*}[!t]
  \caption{Results for traffic scene generation, in terms of collision, offroad, and overall infractions as well as reweighted ELBO (r-ELBO). We compare \method{} against a normalizing flow baseline~\citep{zwartsenberg2023conditional}, a classifier trained on imbalanced synthetic dataset, and a classifier without importance sampling and a time-independent classifier. The final two rows provide the results of distilling the models labeled with $\dagger$ and *.}
  \centering
  \small
  \label{tab: 2nd experiment}
  \begin{tabular}{lllll}
    \toprule
    Method     & Collision ($\%$) $\downarrow$  & Offroad ($\%$) $\downarrow$ & Infraction ($\%$) $\downarrow$ & r-ELBO ($\times 10^{-2}$) $\uparrow$  \\
    \midrule
    Baseline DM & $28.3\pm 0.70$  & $1.3\pm 0.14$ & $29.3\pm 0.64$ &  $-27.5\pm0.01$  \\
    Normalizing flow~\citep{zwartsenberg2023conditional} & $91.2\pm 0.27$ & $13.1\pm 0.48$ & $91.9\pm 0.25$ & --- \\
    \midrule
    Time-independent classifier & $20.7\pm 0.59$& $0.9\pm 0.09$ & $21.4\pm 0.63$ & $-244\pm 30.4$ \\
    Imbalanced classifier (ablation) & $17.8 \pm 1.21$ & $0.9\pm 0.16$ & $18.6\pm 1.30$ & $-27.7\pm 0.01$        \\
    w/o IS classifier (ablation) & $14.6\pm 0.49$ & $0.8\pm 0.13$ & $15.2\pm 0.50$ & $-28.0\pm 0.01$  \\
    \method{}$^\dagger$ (iteration 1)    & $16.4\pm 0.5$ &  $0.9\pm 0.12$ & $17.2\pm 0.44$ & $-27.7\pm 0.01$  \\
    \method{}$^*$ (iteration 2)  & $11.6\pm 0.65$ & $0.6\pm 0.10$ & $12.2\pm 0.60$ & $-28.0\pm 0.01$ \\
        \midrule
    \method{}\ (distillation of $\dagger$) &$12.2\pm0.42 $ & $0.8\pm0.06$ &$12.9\pm0.36$ & $\mathbf{-26.8\pm 0.01}$  \\
    \method{}\ (distillation of *) & $\mathbf{5.1\pm 0.24}$ & $\mathbf{0.5\pm 0.09}$ & $\mathbf{5.6\pm 0.20}$ & $-27.0\pm 0.01$\\
    \bottomrule
  \end{tabular}
\end{table*}

\subsection{Traffic Scene Generation}
\label{sec:IC-experiment}
We continue to the task of traffic scene generation where vehicles of varying sizes are placed on a given two-dimensional map.
Traditionally implemented by rule based systems~\citep{yang1996microscopic, lopez2018microscopic}, this task has recently been approached using generative modeling techniques~\citep{tan2021scenegen, zwartsenberg2023conditional}. 
In both of these prior works, the common approach has been to discard any samples that violate predefined rules, such as a vehicle being outside the designated driving area (``offroad'') or overlapping with another vehicle (``collision'').
Rejecting such samples, while effective, can be computationally wasteful, particularly when rule violations occur frequently. Hence, in this context, we employ \method{} to enhance performance.
The specific task we consider is to generate $12$ vehicles in a given scene, conditioned on a rendered representation of the roadway.
Each vehicle is represented by its position, length, width, orientation and velocity for a total of $7$ dimensions per vehicle.
Vehicles are sampled \emph{jointly}, meaning that the features are in $\mathbb{R}^{N\times 7}$. 
We train the baseline DM employing the formalism in DDPM~\citep{ho2020denoising} with a transformer-based denoising  network~\citep{vaswani2017attention} on a private dataset. Our architecture consists of self-attention layers and map-conditional cross-attention layers in an alternating order.
We use relative positional encodings (RPEs)~\citep{shaw2018self, wu2021rethinking}.
Further details are provided in \cref{app:sec:details-ic}.
Relevant examples (including infracting, and non-infracting ones) and road geometry can be seen in \cref{fig:banner}.

\cref{tab: 2nd experiment} summarizes the results of this experiment. We compare against the baseline DM model and various guided models. Further, we compare against a prior work on this problem based on normalizing flows (NFs) \citep{zwartsenberg2023conditional}. Comparing to the baseline DM, \method{} lowers the infraction rates while maintaining a comparable distribution match. \method{} significantly outperforms the NF baseline because DMs are much more expressive generative models. The infraction rates reported here for the NF baseline are worse than those of \citet{zwartsenberg2023conditional}. This can be attributed to the higher average traffic density in our dataset compared to the INTERACTION dataset~\citep{zhan2019interaction} which \citet{zwartsenberg2023conditional} uses. In \cref{sec:app:ic-interaction} we empirically verify this by training the baseline DM on the INTERACTION dataset.

In the second section of \cref{tab: 2nd experiment} we report results of various guided diffusion models using a synthetic dataset generated by the baseline DM and labelled by the oracle. First, we consider a common approach of classifier guidance in which a time-independent classifier pre-trained \textit{on clean data} is utilized to perform (approximate) classifier guidance \citep{wu2023practical,bansal2023universal}.
In this approach, one-step estimate of $\rvx_0 \approx \frac{\rvx_t + \sigma_t^2 s_\theta(\rvx_t; t)}{\alpha_t}$ is obtained using the diffusion model. This estimate is then passed to the pre-trained classifier.
This method enhances the infraction rate but exhibits a significant decrease in ELBO, indicating a strong distributional misalignment. We also present ablations where we omit the importance sampling (``w/o IS'') step or forego balancing the dataset (``imbalanced classifier'') in \method{}. The ``w/o IS'' ablation improves infraction rates, but they both deteriorate the ELBO.

Different iterations of \method{}, however, shows even better infraction rates. The presence of lower ELBO can be justified by the approximations in \method{}'s objective function and classifiers not being trained to optimality. This is why the results deviate from theory to some extent. On the other hand, training the baseline DM is the only stage where we explicitly maximize the ELBO. Classifiers trained on all the other iterations only implicitly improve ELBO through guiding the model to not allocate probability mass on the invalid region. Finally, in the third section of \cref{tab: 2nd experiment} we demonstrate that our approach of distilling the resulting models back into a single one works well here too, sometimes even surpassing their teacher models. This can be attributed to knowledge distillation effects \citep{hinton2015distilling}.
Overall we find that \method{} works as expected, and provides a competitive infraction rate boost over our baseline model, without shifting the distribution.
To relate this to the introduction, as explained in \cref{sec:app:computational-cost}, using \method{} in producing non-infracting scenes for autonomous vehicle synthetic data generation would reduce GPU costs by 57\% on average.

\subsection{Motion Diffusion}
Our final experiment focuses on human motion generation. Diffusion models have been successfully applied to motion generation and editing tasks \citep{tevet2023human, zhang2024motiondiffuse, shafir2023human, xie2023omnicontrol, cohan2024flexible}. While these models produce diverse and realistic results, they often lack physical plausibility \citep{yuan2022physdiff}. For instance, issues like ground penetration frequently occur in the generated examples. Such imperfections can affect the quality of the generated motions and limit the model's applicability in real-world scenarios.

\begin{table*}[!t]
  \caption{Results of the Motion Diffusion experiment. ``Inf. per step'' is the average rate of generated motion frames with infraction while ``infraction'' is the average rate of generated motions that at least including one infracting frame. r-ELBO is a reweighted ELBO with the same weighting as in diffusion loss.}
  \centering
  \small
  \label{tab:mdm}
  \begin{tabular}{llllll}
    \toprule
    Method     & Infraction ($\%$) $\downarrow$ & Inf. per step ($\%$) $\downarrow$ & r-ELBO ($\times 10^{-2}$) $\uparrow$  & FID $\downarrow$  & KID ($\times 10^{-3}$) $\downarrow$\\
    \midrule
    MDM (baseline DM) & $27.66 \pm 0.77$  & $7.84 \pm 0.27$ & $-1.06 \pm 0.02$ & $0.445 \pm 0.040$ & $8.27 \pm 2.14$\\
    \method{} (Ours) & $24.25 \pm 0.35$ & $6.12 \pm 0.19$ & $\mathbf{-1.01 \pm 0.03}$ & $\mathbf{0.414 \pm 0.030}$ & $\mathbf{6.99 \pm 0.78}$ \\
    w/o IS (ablation)  & $\mathbf{22.85 \pm 0.18}$ & $\mathbf{5.47 \pm 0.18}$ & $-1.13 \pm 0.06$ & $\mathbf{0.415 \pm 0.030}$ & $8.40 \pm 1.83$\\
    \bottomrule
  \end{tabular}
\end{table*}

For our baseline DM, we use the pre-trained checkpoints provided by Motion Diffusion Model (MDM) \citep{tevet2023human}, tailored for text-conditioned motion generation. MDM is a DDPM model with a transformer-based architecture trained on the HumanML3D dataset \citep{guo2022generating}. It uses a pre-trained CLIP embedding module \citep{radford2021learning} for conditioning on the text descriptions.
To address the issue of ground penetration, we implement an oracle that labels motions with ground penetration at any point in their duration as negative.
We employ \method{} with a classifier having the same architecture as MDM, but the CLIP encoder, as the classifier is not text-conditional.

\cref{tab:mdm} summarizes our results of one iteration of \method{} on this dataset.
We report infraction rate, reweighted ELBO (referring to a uniform schedule of $\gamma_t$ in \cref{eq:score-matching}). We also report Fr\'echet Inception Distance (FID) \citep{heusel2017gans}, and Kernel Inception Distance (KID) \citep{binkowski2018demystifying} to measure quality of samples. \method{} improves on all metrics. While the ablation of omitting the IS weighting produces lower infraction rates compared to \method{}, it worsens the reweighted ELBO and KID. Hence, \method{} improves infraction, with a improved model likelihood and sample quality.
We conjecture the relatively smaller improvement in the motion experiment is because the baseline DM predicts $\rvx_0$ \citep{zhong2022guided}.

\section{Related Work}
\citet{hanneke2018actively} proposes a theoretical framework for oracle-based constraints. They however, do not provide practical considerations.
More recently, use of negative and invalid data have been explored to improve the training of generative models. \citet{sinha2021negative} uses heuristic functions to augment the training set of GANs with negative samples, while \citep{giannone2023learning} utilizes a pre-defined negative set.
Meanwhile, data redaction approaches propose methods for removing undesirable learned concepts from pre-trained generative models in safety and security applications \citep{gandikota2023erasing,schramowski2023safe}. Similarly, \citep{kong2023data} explores various data redaction methods, with the validity-based approach being the most relevant to our oracle-assisted guidance, although in the context of GAN literature. They implicitly carry out data redaction by integrating it into the discriminator and fine-tuning the generator. Another approach to constrained generative modeling explicitly incorporates the constraints in the model, similar to a final layer that maps to the constraint set \citep{stoian2024realistic}.

Several studies have explored constrained score-based modeling employing techniques such as diffusion bridges \citep{wu2022diffusion,liu2023learning}, barrier methods \citep{fishman2023diffusion}, or reflected diffusion \citep{lou2023reflected,fishman2023diffusion} or mirror diffusion \citep{liu2023mirror}.
Despite being effective, they rely on constraint-specific information such as closed form, linear, or convex constraints. This imposes strong limitations, making them impractical for general problems where such information is unavailable.

\section{Conclusion}
We have proposed a framework to incorporate constraints into diffusion models. These constraints are defined through an oracle function that categorizes samples as either \emph{good} or \emph{bad}. Importantly, such a flexibility allows for simple integration with human feedback. We have demonstrated our model on different modalities demonstrating how it can benefit safety constraints.

The current limitations we recognize, and the possible future directions for this work are (i) incorporating the true training dataset into the later iterations of the method, as the training dataset only affects the baseline DM. The next stages solely use synthetic data. Although we show theoretically that our guidance only improves the model, this lack of revisiting the true dataset in presence of practical errors and approximations poses challenges for large-scale adoption of our method. Our preliminary experiments of visiting the true dataset at the distillation time have not been successful yet. (ii) Avoiding stacking of classifiers, instead directly learning an artifact that can replace the previous classifier in our method, similar to \citep{de2021diffusion}, is vital to the computational  complexity of the method as the current computational cost scales linearly with the number of classifiers. (iii) Extending \method{} on tabular diffusion to support tabular data~\citep{kotelnikov2209tabddpm}, which compasses a mixture of continuous and categorical data to generalize our work is an inspiring area to future research. (iv) Bridging the gap the diffusion bridge-based approaches and our work which is practically applicable to a larger set of applications is another avenue for future developments.

\section*{Acknowledgments}
We acknowledge the support of the Natural Sciences and Engineering Research Council of Canada (NSERC), the Canada CIFAR AI Chairs Program, Inverted AI, MITACS, the Department of Energy through Lawrence Berkeley National Laboratory, and Google. This research was enabled in part by technical support and computational resources provided by the Digital Research Alliance of Canada Compute Canada (alliancecan.ca), the Advanced Research Computing at the University of British Columbia (arc.ubc.ca), and Amazon.

\section*{Impact Statement}
Our work is intended to {\em improve} generative modeling performance by eliciting improved generalization performance.  Better generalization performance will lead to energy savings as less computation is required to generate ``good'' samples and better performance of systems that can be deployed for societal good such as self-driving cars that crash less often, robotic exoskeletons that are more safely and accurately performant.

\bibliography{bib}
\bibliographystyle{icml2024}

\newpage
\onecolumn
\appendix
\section{Proofs}
\subsection{Proof of Theorem~\ref{thm:method:cls-guidance}}\label{app:proof-thm}
\begin{proof}
    Let $\alpha$ and $(1-\alpha)$ be the prior probabilities of positive and negative examples under $p_\theta(\rvx_t; t)$. Note that $\alpha$ remains independent of $t$ because
    \begin{align*}
        \alpha & = \int p_\theta(\rvx_t; t) p_\theta(\rvx_0 | \rvx_t) p_\theta(y=1 | \rvx_0)\, d\rvx_0 \,d\rvx_t
        = \int p_\theta(\rvx_t; t) p_\theta(\rvx_0 | \rvx_t) \oracle(\rvx_0)\, d\rvx_0 \,d\rvx_t                 \\
               & = \int p_\theta(\rvx_0, \rvx_t; t) \oracle(\rvx_0) \,d\rvx_0 d\rvx_t
        = \int p_\theta(\rvx_0; 0) \oracle(\rvx_0) \,d\rvx_0.
    \end{align*}
    The objective in \cref{eq:cls-time-cross-entropy} is equal to
    \begin{multline}
        -\meanp{t}{\meanp{p_\theta(\rvx_t)}{\meanp{p_\theta(\rvx_0 | \rvx_t)}{\oracle(\rvx_0)} \log \cls_\phi(\rvx_t; t) + \meanp{p_\theta(\rvx_0 | \rvx_t)}{(1 - \oracle(\rvx_0))} \log (1 - \cls_\phi(\rvx_t; t))}}\\
        = -\meanp{t}{\meanp{p_\theta(\rvx_t)}{p(y=1|\rvx_t) \log \cls_\phi(\rvx_t; t) + p(y=0|\rvx_t) \log (1 - \cls_\phi(\rvx_t; t))}}
    \end{multline}
    This is equivalent to \cref{eq:cls-cross-entropy} after replacing $q$ with $p_\theta$, i.e. sampling from the reverse process instead of the forward. Therefore, its optimal solution follows \cref{eq:cls-optimal}. Hence,
    \begin{align*}
            & \ \nabla_{\rvx_t} \log p_\theta(\rvx_t; t) + \nabla_{\rvx_t} \log \cls_{\phi^*}(\rvx_t; t)                                                              \\
        =\  & \nabla_{\rvx_t} \log p_\theta(\rvx_t; t) + \nabla_{\rvx_t} \log \alpha p_\theta(\rvx_t | y=1; t)                                                          \\
            & \hspace{3cm} - \nabla_{\rvx_t} \log \Big[ \overbrace{\alpha p_\theta(\rvx_t|y=1; t) + (1 - \alpha) p_\theta(\rvx_t|y=0; t)}^{p_\theta(\rvx_t; t)} \Big] \\
        =\  & \nabla_{\rvx_t} \log \alpha p_\theta(\rvx_t | y=1; t) = \nabla_{\rvx_t} \log p_\theta(\rvx_t | y=1; t)
    \end{align*}
\end{proof}

\subsection{Proof of Corollary~\ref{thm:method:improved-model}}\label{app:proof-corr}

\begin{proof}
    Since $p_{\theta, \phi^*}$ is defined as the distribution generated by simulating the SDE in \eqref{eq:reverse-process}, its score function $\nabla_{\rvx_t} \log p_{\theta, \phi^*}(\rvx_t; t)$ is by definition equal to $s_{\theta, \phi^*}(\rvx_t; t)$ \citep{risken1996fokker, song2019generative}. Similarly for the baseline DM we have $s_{\theta}(\rvx_t; t) = \nabla_{\rvx_t} \log p_\theta(\rvx_t; t)$.
    Therefore,
    \begin{align}
        \nabla_{\rvx_t} \log p_{\theta, \phi^*}(\rvx_t; t)
         & = s_{\theta, \phi^*}(\rvx_t; t)
        \overset{\text{\cref{eq:gen-neg-score}}}{=\joinrel=} s_\theta(\rvx_t; t) + \nabla_{\rvx_t} \log C_{\phi^*}(\rvx_t; t) \\
         & = \nabla_{\rvx_t} \log p_{\theta}(\rvx_t; t) + \nabla_{\rvx_t} \log C_{\phi^*}(\rvx_t; t)
        \overset{\text{Thm. \ref{thm:method:cls-guidance}}}{=\joinrel=} \nabla_{\rvx_t} \log p_{\theta}(\rvx_t | y=1)
    \end{align}
    Here we derived $s_{\theta, \phi^*}(\rvx_t; t) = \nabla_{\mathbf{x}_t} \log p_\theta(\rvx_t | y=1)$. By~\citep{anderson1982reverse}, we proved the first statement.

    The second statement  follows by decomposing $p_{\theta}(\rvx_t | y=1)$:
    \begin{align*}
        p_{\theta, \phi^*}(\rvx) = p_\theta(\rvx | y=1) \propto p_\theta(\rvx) \oracle(\rvx)\quad
        \Rightarrow \quad p_{\theta,\phi^*}(\rvx) = 0\quad \forall \rvx \in \Omega^\complement.
    \end{align*}
    For the last statement, we have
    \begin{align*}
        & \left.\begin{matrix}
        p_{\theta, \phi^*}(\rvx) = \frac{p_{\theta}(\rvx) \oracle(\rvx)}{\int p_{\theta}(\rvx) \oracle(\rvx) \,d \rvx} \\[1.1em]
        \int p_{\theta}(\rvx) \oracle(\rvx) \,d \rvx \leq 1 \\
                \end{matrix}\right\}
        \Rightarrow p_{\theta, \phi^*}(\rvx) \geq p_{\theta}(\rvx)\quad \forall \rvx \in \Omega  \\
        \overset{\gD \subseteq \Omega}{\Longrightarrow} & \log p_{\theta, \phi^*}(\gD)
        = \sum_{\rvx \in \gD} \log p_{\theta, \phi^*}(\rvx)
        \geq \sum_{\rvx \in \gD} \log p_{\theta}(\rvx) = \log p_{\theta}(\gD).
    \end{align*}
\end{proof}

\begin{figure}[t]
    \centering
    \includegraphics{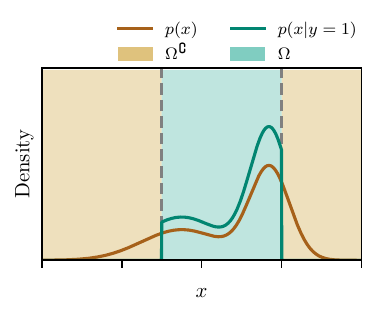}
    \caption{We use this one-dimensional density plot to show how we guide the generation process towards the positive support region indicated by the oracle. The base distribution $p(x)$ is a mixture of two Gaussian distributions shown by the brown curve. We show the regions allowed and disallowed by the oracle respectively by the cyan  and light brown shaded areas. Gen-neG with a Bayes optimal classifier targets the distribution $p(x | y = 1)$ that has probability mass in the allowed region and assigns zero probability outside this region.}
    \label{fig:one-dimensional}
\end{figure}

We demonstrate this corollary with a one-dimension density in \cref{fig:one-dimensional}. We show a ``base distribution'' $p(x)$ and the positive and negative regions $\Omega$ and $\Omega^\complement$, respectively. We can see that the distribution $p(x|y=1)$ assigns no mass to $\Omega^\complement$ and has a larger mass assigned to any point in $\Omega$.

\subsection{Equivalence of cross-entropy loss minimization and KL divergence minimization}\label{app:bayes-optimal-kl}
This is a well-known result in the literature. Nonetheless, we include the result and its proof here for completeness and ease of reference.
\begin{claim}
    minimizing the cross-entropy loss between the classifier output and the true label is equivalent to minimizing the KL divergence between the classifier output and the Bayes optimal classifier.
\end{claim}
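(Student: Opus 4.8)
The plan is to show that for a fixed input distribution, the population cross-entropy loss and the KL divergence between the classifier's predicted distribution and the Bayes-optimal posterior differ only by a constant (the conditional entropy of the true label given the input), which does not depend on the classifier parameters $\phi$.

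First I would set up notation. Consider a binary classification problem with input $\rvx$ (which here plays the role of the noisy datum $\rvx_t$ at a fixed $t$; the argument is pointwise in $t$ and then averaged) drawn from some marginal $p(\rvx)$, and true conditional label distribution $p(y\mid\rvx)$ with $p(y=1\mid\rvx)$ being the Bayes-optimal classifier, call it $\cls^*(\rvx)$. The classifier outputs $\cls_\phi(\rvx) \in [0,1]$, interpreted as a predicted probability that $y=1$. The cross-entropy objective is
\begin{equation*}
    \gL^{\text{CE}}_\phi = -\meanp{p(\rvx)}{\cls^*(\rvx)\log \cls_\phi(\rvx) + (1-\cls^*(\rvx))\log(1-\cls_\phi(\rvx))},
\end{equation*}
matching the form in \cref{eq:cls-cross-entropy,eq:cls-time-cross-entropy} once we identify $\cls^*(\rvx_t) = q(y=1\mid\rvx_t)$.

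Next I would perform the key algebraic step: for each fixed $\rvx$, add and subtract the "self" cross-entropy term $-[\cls^*\log\cls^* + (1-\cls^*)\log(1-\cls^*)]$. This splits the integrand into
\begin{equation*}
    \underbrace{\cls^*\log\frac{\cls^*}{\cls_\phi} + (1-\cls^*)\log\frac{1-\cls^*}{1-\cls_\phi}}_{=\,\KL(\mathrm{Bern}(\cls^*(\rvx))\,\|\,\mathrm{Bern}(\cls_\phi(\rvx)))} \;-\; \big[\cls^*\log\cls^* + (1-\cls^*)\log(1-\cls^*)\big].
\end{equation*}
Taking the expectation over $p(\rvx)$, the first term becomes $\meanp{p(\rvx)}{\KL\big(p(y\mid\rvx)\,\|\,\cls_\phi(\rvx)\big)}$ and the second is the conditional entropy $H(y\mid\rvx)$, which is a constant independent of $\phi$. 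Hence $\gL^{\text{CE}}_\phi = \meanp{p(\rvx)}{\KL(p(y\mid\rvx)\,\|\,\cls_\phi(\rvx))} + H(y\mid\rvx)$, so minimizing the cross-entropy loss over $\phi$ is equivalent to minimizing the (expected) KL divergence to the Bayes-optimal classifier, and the unique minimizer is $\cls_\phi = \cls^*$ wherever $p(\rvx) > 0$. If needed I would note that for the time-dependent case one simply wraps everything in the outer $\meanp{t}{\cdot}$ and the constant becomes $\meanp{t}{H(y\mid\rvx_t)}$.

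I do not expect any real obstacle here — it is a standard Gibbs'-inequality-style decomposition. The only minor care points are: (i) being explicit that the expectation over $p(\rvx)$ is what makes the entropy term a genuine constant (i.e. independent of $\phi$), (ii) handling the boundary cases $\cls^*\in\{0,1\}$ so the $\log$ and the $0\log 0 = 0$ convention are well-defined, and (iii) matching the abstract $p(\rvx)$ here to the concrete $q(\rvx_t)$ or $p_\theta(\rvx_t)$ used in \cref{eq:cls-cross-entropy,eq:cls-time-cross-entropy} respectively, which is immediate since the decomposition is agnostic to which marginal is used.
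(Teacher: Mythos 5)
Your proof is correct and takes essentially the same route as the paper: both decompose $\gL^{\text{CE}}_\phi = \meanp{q(\rvx_t)}{\KL(q(y\mid\rvx_t)\,\|\,p_\phi(y\mid\rvx_t))} + \meanp{q(\rvx_t)}{H(q(y\mid\rvx_t))}$ and observe that the entropy term is $\phi$-independent. The paper expands the KL into entropy-minus-cross-entropy and you add-and-subtract the self cross-entropy term; these are the same algebraic identity read in opposite directions.
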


\begin{proof}
    The Bayes optimal classifier $C^*(\rvx_t; t)$ approximates $q(y = 1 | \rvx_t)$. Let $p_\phi(y | \rvx_t)$ be the distribution represented by the learned classifier $C_\phi(\rvx_t; t)$ i.e., $p_\phi(y=1|\rvx_t) = C_\phi(\rvx_t; t)$. For an arbitrary diffusion time step $t$, the expected KL divergence between the Bayes optimal and the learned classifier therefore is
    \begin{align}
          & \meanp{q(\rvx_t)}{\kl{q(y | \rvx_t)}{p_\phi(y | \rvx_t)}}\nonumber  \\
        = & \ \meanp{q(\rvx_t)}{\meanp{q(y | \rvx_t)}{\log\frac{q(y|\rvx_t)}{p_\phi(y | \rvx_t)}}} \\
        = & \ \meanp{q(\rvx_t)}{q(y=1|\rvx_t)\log\frac{q(y=1|\rvx_t)}{C_\phi(\rvx_t; t)} + q(y=0|\rvx_t) \log\frac{q(y=0|\rvx_t)}{1 - C_\phi(\rvx_t; t)}}  \\
        = & \ \meanp{q(\rvx_t)}{H(q(y | \rvx_t))} - \meanp{q(\rvx_t)}{q(y=1 | \rvx_t) \log C_\phi(\rvx_t; t) + q(y=0 | \rvx_t)\log (1 - C_\phi(\rvx_t; t))}.
    \end{align}
    The first term is the expected entropy of the optimal classifier and is independent of $\phi$. Therefore,
    \begin{align}
          & \argmin_\phi \meanp{q(\rvx_t)}{\kl{q(y | \rvx_t)}{p_\phi(y | \rvx_t)}}\nonumber   \\
        = & \ \argmin_\phi - \meanp{q(\rvx_t)}{q(y=1 | \rvx_t) \log C_\phi(\rvx_t; t) + q(y=0 | \rvx_t)\log (1 - C_\phi(\rvx_t; t))} \\
        = & \ \argmin_\phi \text{CE}(q(y|\rvx_t), p_\phi(y|\rvx_t)).
    \end{align}
\end{proof}
Note that \cref{eq:cls-cross-entropy} is the expected cross entropy for different time steps $t$.

\subsection{Connection between Eq.~(\ref{eq: classifier loss}) and Eq.~(\ref{eq:cls-time-cross-entropy})}
In this section we make the connection between \cref{eq:gen-neg-true-objective} and \cref{eq:cls-time-cross-entropy} more clear. We start with \cref{eq:cls-time-cross-entropy}, ignoring the outer expectation with respect to $t$, is equal to

\begin{align}
    & - \Big(\meanp{p_\theta(\rvx_0, \rvx_t)}{\gO(\rvx_0)\log \cls_\phi(\rvx_t;t)+ (1 - \gO(\rvx_0))\log(1 - \cls_\phi(\rvx_t; t)}\Big)  \\
    = & \ -\textcolor{blue}{\int p_\theta(\rvx_0, \rvx_t)} \Big( \textcolor{blue}{p_\theta(y=1|\rvx_0)}\log\cls_\phi(\rvx_t; t) + \textcolor{blue}{p_\theta(y=0 | \rvx_0)}\log(1 - \cls_\phi(\rvx_t;t)) \Big)\,d\rvx_0d\rvx_t \\
    = & \ -\int \textcolor{blue}{p_\theta(y=1) p_\theta(x_0 | y=1) p_\theta(x_t | x_0)} \log\cls_\phi(\rvx; t)\,d\rvx_0d\rvx_t \nonumber \\
    & \ -\int \textcolor{blue}{p_\theta(y=0) p_\theta(x_0 | y=0) p_\theta(x_t | x_0)} \log(1 - \cls_\phi(\rvx_t;t))\,d\rvx_0d\rvx_t  \\
    \approx & \ -\int p_\theta(y=1) p_\theta(x_0 | y=1) \textcolor{blue}{q(x_t | x_0)} \log\cls_\phi(\rvx; t)\,d\rvx_0d\rvx_t \nonumber \\
    & \ -\int p_\theta(y=0) p_\theta(x_0 | y=0) \textcolor{blue}{q(x_t | x_0)} \log(1 - \cls_\phi(\rvx_t;t))\,d\rvx_0d\rvx_t  \\
    = & \  \alpha \meanp{p_\theta(\rvx_0|y=1)}{\meanp{q(\rvx_t|\rvx_0)}{- \log \cls_\phi(\rvx_t; t)}}  + (1 -\alpha) \meanp{p_\theta(\rvx_0 | y=0)}{\meanp{q(\rvx_t|\rvx_0)}{- \log (1 - \cls_\phi(\rvx_t; t))}} \\
    :=& \frac{1}{N} \sum_{\rvx_0 \in \synth^+} \alpha \meanp{q(\rvx_t|\rvx_0)}{- \log \cls_\phi(\rvx_t; t)} + \frac{1}{N} \sum_{\rvx_0 \in \synth^-} (1 - \alpha) \meanp{q(\rvx_t|\rvx_0)}{- \log (1 - \cls_\phi(\rvx_t; t))},
\end{align}
which recovers \cref{eq: classifier loss}.

\subsection{\method{}'s objective function}\label{app:objective-is}
Here we show why \cref{eq: classifier loss} is an importance sampling estimator of the original objective function in \cref{eq:gen-neg-true-objective}.

\begin{align}
    \gL_{\phi}^{\text{cls}}(\alpha) := & \ \alpha \meanp{p_\theta(\rvx_0|y=1)}{\meanp{q(\rvx_t|\rvx_0)}{- \log \cls_\phi(\rvx_t; t)}}\nonumber  \\
    +  & \ (1 - \alpha) \meanp{p_\theta(\rvx_0 | y=0)}{\meanp{q(\rvx_t|\rvx_0)}{- \log (1 - \cls_\phi(\rvx_t; t))}} \\
    =  & \ - \meanp{p_\theta(y)}{ \meanp{p_\theta(\rvx_0|y)}{\meanp{q(\rvx_t | \rvx_0)}{y \log C_\phi(\rvx_t; t) + (1-y) \log (1 - C_\phi(\rvx_t; t)}} }.
\end{align}
Now we apply importance sampling to $p_\theta(y)$ by sampling from $\pi(y)$ as the proposal distribution. Therefore,
\begin{align}
    \gL_{\phi}^{\text{cls}}(\alpha) = & \ - \meanp{p_\theta(y)}{ \meanp{p_\theta(\rvx_0|y)}{\meanp{q(\rvx_t | \rvx_0)}{y \log C_\phi(\rvx_t; t) + (1-y) \log (1 - C_\phi(\rvx_t; t)}} }  \\
    = & \ - \meanp{\pi(y)}{ \frac{p_\theta(y)}{\pi(y)} \meanp{p_\theta(\rvx_0|y)}{\meanp{q(\rvx_t | \rvx_0)}{y \log C_\phi(\rvx_t; t) + (1-y) \log (1 - C_\phi(\rvx_t; t)}} } \\
    = & \ \frac{p_\theta(y=1)}{\pi(y=1)} \meanp{p_\theta(\rvx_0|y=1)}{\meanp{q(\rvx_t | \rvx_0)}{-\log C_\phi(\rvx_t; t)}}\nonumber \\
    & + \frac{p_\theta(y=0)}{\pi(y=0)} \meanp{p_\theta(\rvx_0|y=0)}{\meanp{q(\rvx_t | \rvx_0)}{-\log (1 - C_\phi(\rvx_t; t))}} \\
    = & \ \meanp{p_\theta(\rvx_0|y=1)}{ \frac{\alpha}{\pi(y=1)} \meanp{q(\rvx_t | \rvx_0)}{-\log C_\phi(\rvx_t; t)}}\nonumber \\
    & + \meanp{p_\theta(\rvx_0|y=0)}{ \frac{1 - \alpha}{\pi(y=0)} \meanp{q(\rvx_t | \rvx_0)}{-\log (1 - C_\phi(\rvx_t; t))}} \label{eq:loss-is-proof}
\end{align}
In our case, $\pi(y)$ is a uniform Bernoulli distribution i.e., $\pi(y=1) = \pi(y=0) = 0.5$. Therefore, minimizing \cref{eq: classifier loss} is equivalent to minimizing a Mone Carlo estimate of \cref{eq:loss-is-proof}.

\section{Experimental details}

\subsection{Checkerboard Experiment}
\paragraph{Architecture} We use a fully connected network with 2 residual blocks as shown in \cref{fig:toy-arch}. The hidden layer size in our experiment is 256 and timestep embeddings (output of the sinusiodal embedding layer) is 128. Our classifier has a similar architecture, the only difference is that the classifier has a different output dimension of one. Our baseline DM and classifier networks both have around 330k parameters.
\paragraph{Training the baseline DM} We train our models baseline models on a single GPU, (we use either of GeForce GTX 1080 Ti or GeForce GTX TITAN X) for 30,000 iterations. We use the Adam optimizer \citep{kingma2014adam} with a batch size of $3 \times 10^{-4}$ and full-batch training i.e., our batch size is 1000 which is the same as the training dataset size.
\paragraph{Training the classifiers} Each classifier is trained on a fully-synthetic dataset of 100k samples which consists of 50k positive and 50k negative samples. This dataset is generated with 100 diffusion steps. We train the classifier for 20k iterations with a batch size of 8192. We use Adam optimizer with a learning rate of $3 \times 10^{-3}$.
\paragraph*{Distillation} The distilled models have the same architecture and hyperparameters as the baseline DM model. They are trained for 250k iterations on the true dataset with a batch size of 1000. We use Adam optimizer with a learning rate of $3 \times 10^{-4}$.
\paragraph{Diffusion process} We use the EDM framework in this experiment with a preconditioning similar to the one proposed in \citet{karras2022elucidating}. In particular, the following precoditioning is applied to the the network in \cref{fig:toy-arch}, called $F_\theta(\rvx_t, t)$, to get $D_\theta(\rvx_t; t)$ which returns an estimate of $\rvx_0$.
\begin{equation}
    D_\theta(\rvx_t; t) = \frac{\sigma_{\text{data}}^2}{\sigma(t)^2 + \sigma_{\text{data}}^2} \rvx_t + \sigma(t)F_\theta\left( \frac{1}{\sqrt{\sigma(t)^2 + \sigma_{\text{data}}^2}} \rvx_t; \frac{1}{4} \ln(\sigma(t)) \right),
\end{equation}
where $\sigma_{\text{data}} = 1$. Since smaller noise levels are important in our application, we changed the training distribution of $t$ from the log-Normal used in \citet{karras2022elucidating} to a log-uniform with the support of $\sigma_{\text{max}} = 80$ and $\sigma_{\text{min}} = 2 \times 10^{-3}$. In total, we spent around 250 GPU-hours for this experiment.
\begin{figure}
    \centering
    \includegraphics[scale=1]{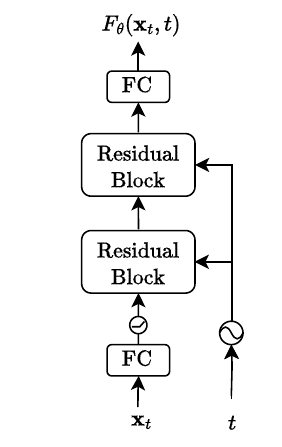}
    \hspace{10em}
    \includegraphics[scale=1]{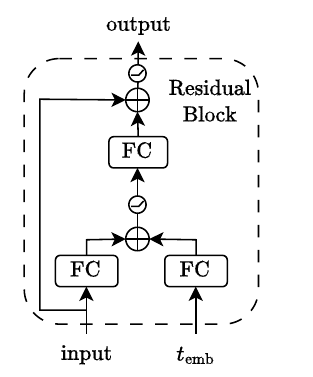}
    \caption{Architecture of the network in the checkerboard experiment. Left: the overall of the model. Right: detailed architecture of our ``Residual Block''. In this architecture, timestep is embedded using sinusoidal embedding and all nonlinearities are SiLU. The output of the network $F_\theta(\rvx_t, t)$ is then used in a preconditioning function to get an estimate of $\rvx_0$.}
    \label{fig:toy-arch}
\end{figure}
\paragraph{Sampling} We use the second-order Heun solver of \citet{karras2022elucidating} with 100 sampling steps and $S_{\text{churn}} = 10$. We modify the schedule of $\sigma$ to a log-linear schedule from $\sigma_{\text{max}}$ to $\sigma_{\text{min}}$.

\subsection{Traffic Scene Generation}\label{app:sec:details-ic}
\paragraph{Overview} This section provides additional details for the traffic scene generation task. The architectures for training the baseline DM model, classifiers and distillation models are majorly based on transformers introduced by \citet{vaswani2017attention} . In particular, the architecture backbone consists of an encoder, a stack of attention residual blocks, and a decoder. Each of them will be discussed in detail later.
The original data input shape is $[B, A, F]$ corresponding to $A$ vehicles and $F$ feature dimensions in a batch with $B$ many scenes.

In terms of parameters, the attention layers comprise the major portion of the entire architectures. This leads to the difference in decoder being relatively minor, and the resulting architectures all contain approximately $6.3$ million parameters. We use NVIDIA A100 GPUs for training and validating models, synthetic datasets generation with around $400$ GPU-hours in total. We train each model with a batch size of $64$ and Adam optimizer with a learning rate of $10^{-4}$.

\paragraph{Encoder and time embeddings}
To generate input features, we use sinusoidal positional embeddings to embed the diffusion time step and 2-layer MLP with activation function SiLU to embed the original data separately into $H = 196$ hidden feature dimensions. The sum of the two embeddings is the input that is fed into the attention-based architecture.

\paragraph{Self-attention and cross-attention layers}
The major implementation of multi-head ($k=4$) attention blocks is built on Transformer~\citep{vaswani2017attention}.
Applying self-attention across agents enables model to learn the multi-agent interactions, while applying map-conditional cross-attention between agents and map allows agents to interact with the road representations.
To prepare road image for model input, we use a convolutional neural network and a feed-forward network~\citep{carion2020end} to generate a lower-resolution map $m'\in\sR^{196\times 32\times 32}$ from the original image $m\in\sR^{3\times 256\times 256}$. Since the transformer architecture is permutation-invariant, we add a 2D positional encoding~\citep{parmar2018image, bello2019attention} based on $m'$ on the top of the map representation to preserve the spatial information of the image.

\paragraph{Relative Positional Encodings (RPEs)}During experiments, we find the collision rate is much higher than the offroad rate. In order to effectively lower the frequency of or completely avoid vehicle collision occurrence, we manage to capture the relative positions by performing relative positional encodings (RPEs) in self-attention residual blocks and enforce the vehicles being aware of the other vehicles in close proximity in each scene.
Following~\citet{shaw2018self, wu2021rethinking, harvey2022flexible}, we compute the distances of each pair of vehicles and summarise into a tensor of shape $[B, A, A]$, where $d^b_{ij}$ is the distance between vehicle $i$ and $j$ in the $b^{\text{th}}$ scene.
We choose to use sinusoidal embeddings (similar to how we embed diffusion time $t$) to parameterize $d^b_{ij}$ rather than logarithm function $f_{\text{RPE}}(d^b_{ij}) = \log(1 + d^b_{ij})$, as we need to adequately amplify the pairwise distances between vehicles when it is comparably small.
We perform this operation together with diffusion time embedding at each diffusion time step, and we regard their sum as the complete pairwise distance embeddings.
The resulting embedding tensor $\rvp$ is of the shape $[B, A, A, H]$, where $\rvp^b_{ij}$ is the encoding vector of length $H$ representing the pairwise distance of vehicle $i$ and $j$ in the $b^{\text{th}}$ scene.

In each scene, we have an input sequence, $\rvx = (\rvx_1, \cdots, \rvx_A)$, and each $\rvx_i$ is linearly transformed to query $\rvq_i = W^Q \rvx_i$, key $\rvk_i = W^K \rvx_i$ and value $\rvv_i = W^V \rvx_i$.
We also apply linear transformation onto RPEs to obtain query $\rvp_{ij}^Q = U^Q \rvp_{ij}$, key $\rvp_{ij}^K = U^K \rvp_{ij}$ and value $\rvp_{ij}^V = U^V \rvp_{ij}$.
Then the add-on output from the self-attention residual block is the aggregated outputs of the vanilla transformer and the relative-position-aware transformer:
\begin{align}
    \rvx_i^{\text{output}}        & = \rvx_i + \sum_{j=1}^A \alpha_{ij} (\rvv_j + \rvp_{ij}^V)                                                                                                           \\
    \text{where}\quad \alpha_{ij} & = \frac{\exp(e_{ij})}{\sum_{k=1}^A \exp(e_{ik})}\ \text{and}\  e_{ij} = \frac{\rvq_i^\top \rvk_j + \rvp_{ij}^{Q^\top}\rvk_j + \rvq_i^\top\rvp_{ij}^K}{\sqrt{d_\rvx}}
\end{align}

\paragraph{Decoder}
The settings for baseline, distillation models and classifiers are almost identical except the decoder for producing the final output. For baseline and distillation models, we apply 2-layer MLP and reconstruct the output of the shape $[B, A, H]$ from the final attention layer into $[B, A, D]$ through the decoder.
To ensure we output individual label for each vehicle with classifiers, we conduct the operations as follows.
The decoder takes the hidden representation of the shape $[B, A, H]$ and produces a tensor with feature dimension $F'=1$ with a 2-layer MLP, which is the predicted labels from the classifiers.

\subsection{Motion Diffusion}
For the task of text-conditional motion generation, we use the HumanML3D dataset \citep{guo2022generating}. This dataset contains 14,616 human motions annotated by 44,970 textual descriptions. It includes motions between 2 and 10 seconds in length and their total length amounts to 28.59 hours. Each motion is between $36$ and $196$ frames, with the majority of them comprising $196$ frames. Each frame is represented by a $263$-dimensional feature vector, resulting in a dimensionality of over $51,000$ for the largest motions.

We used the official implementation of MDM\footnote{\href{https://github.com/GuyTevet/motion-diffusion-model}{\url{https://github.com/GuyTevet/motion-diffusion-model}}} for our Motion Diffusion experiment. For the baseline DM, we used their officially released best pretrained checkpoint of text-to-motion task on HumanML3D dataset. We generate a synthetic dataset of around 250k positive and 250k negative examples from the baseline DM which is a DDPM-based model with 1000 diffusion steps. We then define our classifier architecture using their code base. Following our other experiments, our classifier architecture is the same as the baseline DM model. We train the classifier with a batch size of 128 and a learning rate of $10^{-4}$ for 300k iterations. Otherwise, we use the same hyperparameters as in \citet{tevet2023human}. All the training and data generation is done on A100 GPUs.

To compute the FID scores, in accordance with \citet{tevet2023human}, we generate one motion for each caption in the HumanML3D test set, resulting in a total of 4,626 generated motions.

The total compute used for this experiment (generating the datasets and training the classifiers) was around 600 GPU-hours.

\begin{figure}[tb]
    \centering
    \includegraphics[width=0.8\textwidth]{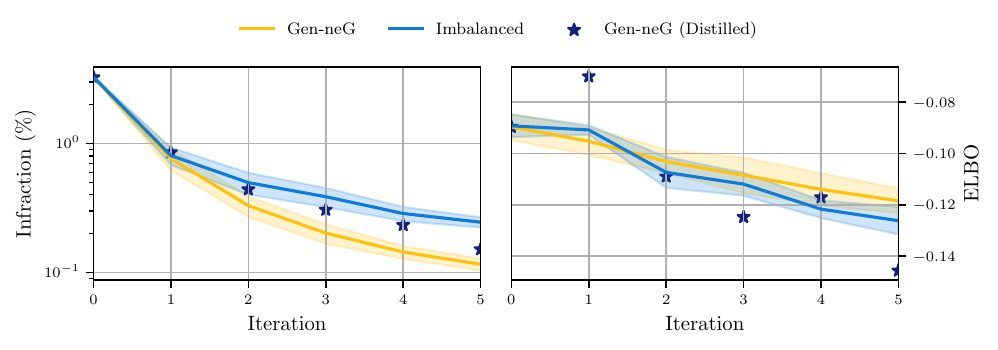}
    \caption{Infraction and ELBO estimations from different iterations of \method{}, distilled \method{} and an ablated version of it without label imbalance correction. \method{} achieves a lower infraction rate and a comparable ELBO.}
    \label{fig:2d-ablation-imbalance}
\end{figure}

\section{Ablation studies}

\subsection{Label imbalance}\label{app:ablation-imbalance}
As mentioned in \cref{sec:method:classifier}, generating synthetic datasets for training the classifiers without careful planning leads to significant label imbalance issues, which impede the training process. In this section we perform an ablation study to empirically demonstrate its detrimental impact in our experiments.

As a reminder, \method{} requires an equal number of positive and negative examples in the synthetic datasets, and it employs importance sampling to address any distribution shift introduced.
In the ablated experiment (referred to as "imbalanced" in the results), the ratio of positive to negative examples is model-dependent, being equal to the model's infraction rate.
It results in a gradual increase in the dominance of positive examples as the model's infraction rate decreases.

\paragraph*{Checkerboard experiment} We repeat our checkerboard experiment with and without \method{}'s imbalance correction. In each iteration, we create a synthetic dataset of 20,000 samples and train a binary classifier for 10,000 iterations. The other details are the same as the experiment in the main text. We show the performance of these models in \cref{fig:2d-ablation-imbalance}.
We observe that ``imbalaced'', the ablated version of \method{}, is consistently outperformed by \method{} in infraction rate. Furthermore, the performance gap between the two methods widens as the models improve.
However, it is worth noting that \method{} achieves a comparable ELBO to that of ``imbalanced'' despite these infraction rate differences.

\paragraph*{Traffic Scene Generation} We conduct a similar ablation as described above, where we train classifiers on a imbalanced datasets of the same size as our method. The results of this ablated experiment are presented in \cref{tab: 2nd experiment} in the main text.

\subsection{Synthetic dataset size}
We conducted an ablation on the number of samples required on the checkerboard and the traffic scene generation tasks.~\cref{fig:toy-ablation-n} and~\cref{tab:ic-ablation-n} show our results. We observe that the infraction rate constantly decreases irrespective of the dataset size. However, in order to avoid distribution shift we need a large enough dataset, as is evident from the ELBO plot. It is important to emphasize that the classifiers are trained on fully synthetic data generated by the model itself. Therefore, in principle we have access to an unbounded number of samples. As our results show, for the best performance, it is important to ensure the sample size is sufficiently large.

\section{Additional results}

\begin{figure}[!t]
    \centering
    \includegraphics[width=0.8\textwidth]{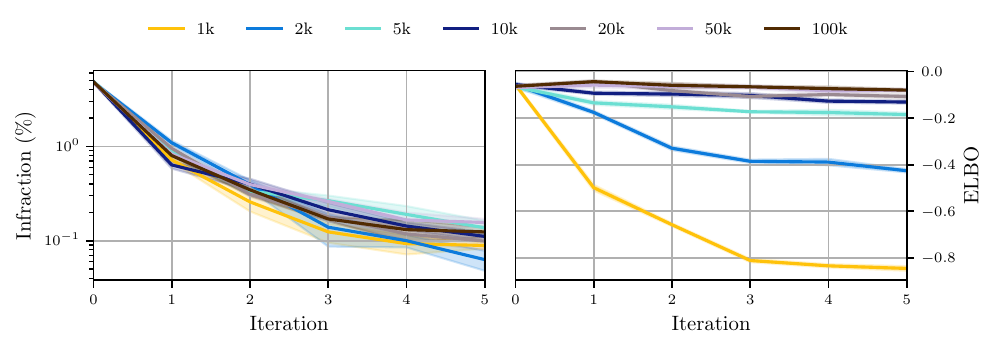}
    \caption{Ablation study for checkerboard experiment on synthetic dataset size}
    \label{fig:toy-ablation-n}
\end{figure}

\begin{table}[!t]
\centering
  \caption{Ablation study for the traffic scene generation experiment on synthetic dataset size}
  \label{tab: 2nd experiment ablation study}
  \begin{tabular}{lllll}
    \toprule
    Classifier dataset size     & Collision ($\%$) $\downarrow$  & Offroad ($\%$) $\downarrow$ & Infraction ($\%$) $\downarrow$ & r-ELBO ($\times 10^{-2}$) $\uparrow$  \\
    \midrule
    baseline DM  & $28.3\pm 0.7$  & $1.3\pm 0.1$ & $29.3\pm 0.6$ &  $-27.5\pm0.01$   \\
        \midrule
    $8,000$ & $23.8 \pm 0.4$ & $0.9\pm 0.2$ & $24.5 \pm 0.5$ & $-28.0\pm 0.01$ \\
    $40,000$ & $19.7\pm 0.8$ & $0.8\pm 0.2$ & $20.3\pm 0.9$ & $-27.8\pm0.01$\\
    $80,000$ & $17.6\pm 0.7$ & $0.7\pm 0.2$ & $18.2\pm 0.6$ & $-27.7\pm 0.01$\\
    $400,000$ & $16.6 \pm 0.7$ & $0.8\pm 0.2$ & $17.5\pm 0.7$ & $-27.7\pm 0.01$ \\
    $800,000$ & $16.4\pm 0.5$ &  $0.9\pm 0.1$ & $17.2\pm 0.4$ & $-27.7\pm 0.01$ \\
    \bottomrule
  \end{tabular}
  \label{tab:ic-ablation-n}
\end{table}

\subsection{Computational cost and sampling latency} \label{sec:app:computational-cost}
Here we discuss the latency of different models considered in this paper. We first report the wall-clock time of generating samples from models. However, in order to deploy any of these models, one should ensure a generated sample is valid. Therefore, all the models should be used together with rejection sampling. We discuss the latency in conjunction with rejection sampling in the second part of this section.

\paragraph{Latency} We compute the average wall-clock time of the baseline models and iterations of \method{} for different experiments and present the results in~\cref{tab:comp-time}.
We observe that additional classifiers linearly increase the running time (for conciseness, we have not included further iterations of \method{} on checkerboard experiments in \cref{tab:comp-time} as its runtime simply continues growing linearly). Moreover, in the Traffic Scenes and checkerboard experiments, the overhead of each classifier is larger than the runtime of the baseline model alone (almost 1.5 times). This is because the architecture of our classifier is the same as the baseline model and for each forward pass of a classifier-guided model, one forward and one backward pass through the classifier is required. However, in the Motion diffusion experiment, since the classifier is not text-conditional, this overhead is relatively smaller. We also report the total time to generate one batch of samples for all the models in \cref{tab: 2nd experiment}. This table also includes the number of parameters for each model as a proxy for memory consumption. It is important to note that our distilled models have the exact same latency and memory consumption as the baseline diffusion model, since we use the same architecture for the distilled model.

\paragraph{Rejection sampling} To ensure that a sample from a model is valid, deploying any of these models requires using them with rejection sampling.
Assume a model with infraction rate of $\epsilon$ and sampling time of $t$ seconds is given. A rejection sampling loop with this model takes $\frac{1}{1 - \epsilon} t$ seconds. However, in a different, more realistic setting, we require an accepted sample in ``one iteration'' of running the model. Since all models have some infraction rate, we instead generate a batch of samples and require at least one non-infracting sample with high probability. As stated in \cref{sec:intro}, generating at least one non-infracting sample with $(1 - \delta)$ probability requires $\frac{\log \delta}{\log \epsilon}$ parallel samples. Therefore, any improvement to $\epsilon$ leads to improved computational complexity. In particular,  we get the following reductions in the required number of samples:

\begin{table}[!t]
\centering
    \caption{Wall-clock time of one denoising step}
    \begin{tabular}{llll}
        \toprule
        & \multicolumn{3}{c}{Time per diffusion step (s) ($\times 10^{-3})$}\\
        \cmidrule{2-4}
        Method & Checkerboard & Traffic Scenes & Motion Diffusion \\
        \midrule
        Baseline DM / distilled \method{} & $0.2 \pm 1 \times 10^{-2}$ & $32 \pm 9\times 10^{-3}$ & $83 \pm 2 \times 10^{-1}$ \\
        \method{} (iteration 1) & $0.5 \pm 2 \times 10^{-2}$ & $86 \pm 2\times 10^{-2} $ & $171 \pm 8 \times 10^{-1}$ \\
        \method{} (iteration 2) & $0.8 \pm 2 \times 10^{-2}$ & $138 \pm 3\times 10^{-2}$ & - \\
        \bottomrule
    \end{tabular}
    \label{tab:comp-time}
\end{table}

\begin{table}[!t]
\centering
    \caption{Sampling time of the traffic scenes experiment (1000 steps)}
    \begin{tabular}{lll}
    \toprule
    Method & Parameters (M)  & Latency (s) \\
    \midrule
    Baseline DM & $6.3$ & $38.58 \pm 0.20$ \\
    \method{} (iteration 1) & $12.6$ & $86.71 \pm 0.19$ \\
    \method{} (iteration 2) & $18.9$ & $138.73 \pm 0.42$ \\
    \method{} (distilled) & $6.3$ & $38.43 \pm 0.37$ \\
    Time-Independent classifier & $12.6$ & $126.40 \pm 0.20$ \\
    Imbalanced classifier & $12.6$ & $86.88 \pm 0.20$ \\
    w/o IS classifier & $12.6$ & $86.81 \pm 0.21$ \\
    \bottomrule
    \end{tabular}
    \label{tab:latency-ic}
\end{table}

\begin{itemize}
    \item 47\% on the checkerboard experiment (baseline vs. distillation of 5 iterations of Gen-neG),
    \item 57\% on the traffic scenes experiment (baselines vs. distillation of stacked Gen-neG),
    \item 9\% on the motion diffusion experiment.
\end{itemize}

We conjecture the relatively smaller improvement in the motion diffusion experiment is because the baseline DM predicts $\rvx_0$. A follow up to MDM, argues that $\rvx_0$-prediction models are hard to guide (see appendix A of \citet{zhong2022guided}).

\subsection{Faster synthetic dataset generation}
With the iterative stacking of the classifiers in \method{}, as the model becomes better in avoiding invalid samples, it becomes harder to generate a balanced synthetic dataset. To further reduce the computational cost for cases where the infraction rate is very small, one can employ Sequential Importance Sampling (SIS) using a proposal distribution with higher infraction rate. Concretely, assume we use an SDE solver with pre-defined timesteps $t_0=0 < t_1 < \ldots < t_T=1$ to generate the samples. Therefore the loss function in \cref{eq:gen-neg-true-objective} becomes 
\begin{equation}
    \gL_\phi^{\text{cls}} = \mathbb{E}_{p_\theta(\mathbf{x}_0)}\Big[l(\mathbf{x}_0)\Big] = \mathbb{E}_{p_\theta(\mathbf{x}_{t_0}, \ldots, \mathbf{x}_{t_T})}\Big[l(\mathbf{x}_0)\Big] = \mathbb{E}_{\pi(\mathbf{x}_{t_0}, \ldots, \mathbf{x}_{t_T})}\Big[\frac{p_\theta(\mathbf{x}_{t_0}, \ldots, \mathbf{x}_{t_T})}{\pi(\mathbf{x}_{t_0}, \ldots, \mathbf{x}_{t_T})} l(\mathbf{x}_0)\Big],
\end{equation}
where $l(\rvx_0) = \meanp{t, q(\rvx_t | \rvx_0)}{ \oracle{}(\rvx_0) \log \cls_\phi(\rvx_t; t) + (1 - \oracle{}(\rvx_0)) \log (1 - \cls_\phi(\rvx_t; t)) }$ and $\pi$ is a proposal distribution defined as another diffusion model for example, an earlier iteration of Gen-neG. Therefore, $\frac{p(\mathbf{x}_{t_0}, \ldots, \mathbf{x}_{t_T})}{\pi(\mathbf{x}_{t_0}, \ldots, \mathbf{x}_{t_T})} = \prod_{i=1}^T\frac{p(\mathbf{x}_{t-1} | \mathbf{x}_t)}{\pi(\mathbf{x}_{t-1} | \mathbf{x}_t)}$ in which both the numerator and denominator are Gaussian distributions with the same variances but different means.

\begin{figure}
    \centering
    \includegraphics[width=\linewidth]{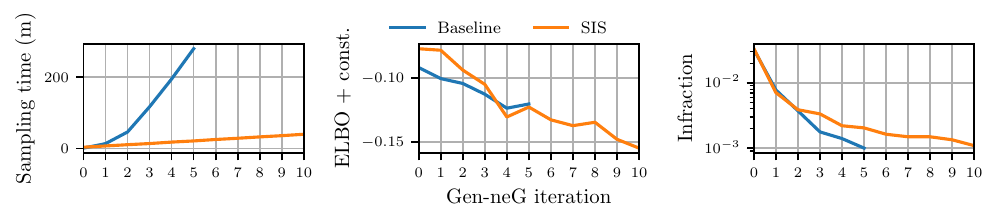}
    \caption{The results of the SIS approach for faster synthetic dataset generation for \method{}. SIS makes the sampling time grow linearly with iterations of Gen-neG while maintaining a comparable performance to the standard synthetic dataset sampling approach in \method{}.}
    \label{fig:app:faster-sampling}
\end{figure}

\cref{fig:app:faster-sampling} shows the results of this approach on the checkerboard experiment. It shows that SIS dataset sampling time grows linearly and independent of the infraction rate while maintaining a comparable performance. This linear growth is due to the linear growth in complexity of (non-distilled) Gen-neG models.

\begin{table}[!t]
\centering
  \caption{Comparison of a diffusion model and normalizing flow model trained on the INTERACTION dataset. The normalizing flow results are taken from \citet{zwartsenberg2023conditional}.}
  \label{tab:app:ic-interaction}
  \begin{tabular}{llll}
    \toprule
    Method    & Collision ($\%$) $\downarrow$  & Offroad ($\%$) $\downarrow$ & Infraction ($\%$) $\downarrow$  \\
    \midrule
    Diffusion model  & $7.34\pm 0.16$  & $6.94\pm 0.06$ & $13.62\pm 0.13$   \\
    Normalizing flow~\citep{zwartsenberg2023conditional} & $9.00\pm 1.00$& $12.00\pm 1.00$ & $20.00\pm 1.00$ \\
    \bottomrule
  \end{tabular}
\end{table}

\subsection{INTERACTION dataset}\label{sec:app:ic-interaction}
Here we report our results of training a baseline DM model on the INTERACTION dataset~\citep{zhan2019interaction}. \cref{tab:app:ic-interaction} shows the superior performance of our diffusion model compared to the normalizing flow results from \citet{zwartsenberg2023conditional}. The lower infraction rates compared to \cref{tab: 2nd experiment} suggests that the INTERACTION dataset is a simpler dataset compared to the one we used in \cref{sec:IC-experiment}. One can further improve upon the diffusion model in \cref{tab:app:ic-interaction} by using it as a baseline DM in \method{}.

\subsection{More visualization results for the traffic scene generation experiment}\label{app:ic-more-results}
In \cref{fig:ic-more-results} we report more visualization results from our traffic scene generation experiment. This figure follows from and adds more details to \cref{fig:banner}.

\subsection{Overfitting in the checkerboard experiment}\label{app:overfitting}
Here we present our results regarding the overfitting of the baseline DM in the checkerboard experiment. We run an experiment with 200,000 training iterations, much larger than the 30,000 iterations in the reported results. As we can see in \cref{fig:toy-overfit}, the infraction rate keeps decreasing. However, the model starts overfitting after around 30,000 iterations, as measured by the ELBO on a held-out set. This suggests that the architecture is expressive enough to model sharp jumps in the learned density. However, simply training it on a small dataset without incorporating any prior on ``where to allocate its capacity'' fails because the model does not receive any signal on where the actual ``sharp jump'' is. \method{}, on the other hand, provides this kind of signal through the oracle-assisted guidance.
\begin{figure}[t]
    \subfigure[]{
    \includegraphics[width=0.18\textwidth]{figs/plot_kde_6.pdf}
    \label{fig:ic-more-results-a}
    }
    \hfill
    \subfigure[]{
    \includegraphics[width=0.18\textwidth]{figs/plot_kde_7.pdf}
    \label{fig:ic-more-results-b}
    }
    \hfill
    \subfigure[]{
    \includegraphics[width=0.18\textwidth]{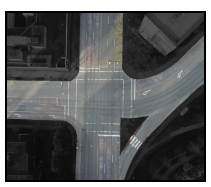}
    \label{fig:ic-more-results-c}
    }
    \hfill
    \subfigure[]{
    \includegraphics[width=0.18\textwidth]{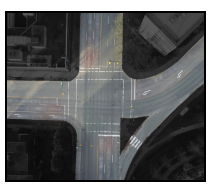}
    \label{fig:ic-more-results-d}
    }
    \hfill
    \subfigure[]{
    \includegraphics[width=0.18\textwidth]{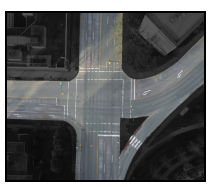}
    \label{fig:ic-more-results-e}
    }
    \caption{Complete visualization comparisons for infraction in traffic scene generation experiments. Subplots show infraction per unit area under different models. \subref{fig:ic-more-results-a} baseline DM; \subref{fig:ic-more-results-b} first iteration of \method{}; \subref{fig:ic-more-results-c} second iteration of \method{}. \subref{fig:ic-more-results-d} and \subref{fig:ic-more-results-e} are the distillation models corresponding to \subref{fig:ic-more-results-b} and \subref{fig:ic-more-results-c} respectively. A clear reduction in terms of infractions per unit area can be observed from left to right.}
    \label{fig:ic-more-results}
\end{figure}
\begin{figure}[t]
    \centering
    \includegraphics[width=\textwidth]{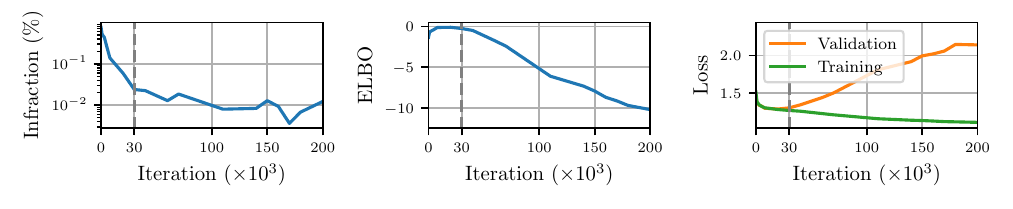}
    \caption{Overfitting results in the checkerboard experiment. The first two plots on the left respectively show the infraction rate and ELBO on a held-out validation set. We observe that training the baseline DM for longer can achieve much lower infraction rates. However, it quickly starts to overfit, leading to poor ELBO estimates on the held-out validation set. The last plot shows the training and validation loss of the model. These plots confirm that the checkpoint we used for baseline DM in \method{} at 30,000 iterations does not overfit.}
    \label{fig:toy-overfit}
\end{figure}
\end{document}